\newtheorem{theorem}{Theorem}
\newtheorem{lemma}[theorem]{Lemma}
\newtheorem{definition}{Definition}
\theoremstyle{nonumberplain}\theoremsymbol{\ensuremath{\Box}}
\newtheorem{proof}{Proof.}
\theoremstyle{empty}
\newcommand{\beq}{\begin{eqnarray}}
\newcommand{\eeq}{\end{eqnarray}}
\newcommand{\beqn}{\begin{equation}}
\newcommand{\eeqn}{\end{equation}}
\newcommand{\R}{\mathbb{R}}
\newcommand{\mb}{\mathbf}
\newcommand{\bx}{\mb{x}}
\newcommand{\by}{\mb{y}}
\newcommand{\bz}{\mb{z}}
\newcommand{\y}{\boldsymbol{y}}
\newcommand{\edges}{E}
\newcommand{\n}{\mathcal{N}}
\title{The Complexity of Approximating a Bethe Equilibrium\footnote{The preliminary version of this paper was presented at 
International Conference on Artificial Intelligence and Statistics (AISTATS) 2012.}}
\author{ \begin{tabular}{cc}
Jinwoo Shin\footnote{IBM T. J. Watson Research, Yorktown Heights, NY 10598. Email:
mijirim@gmail.com.}
\end{tabular}}
\begin{document}

\maketitle

\begin{abstract}

This paper resolves a common complexity issue in the Bethe
approximation of statistical physics and the Belief
Propagation (BP) algorithm of artificial intelligence. The Bethe
approximation and the BP algorithm are heuristic methods for estimating the partition
function and marginal probabilities in graphical models, respectively.
The computational complexity of the Bethe approximation is decided by
the number of operations required to solve a set of non-linear
equations, the so-called Bethe equation. 
Although the BP algorithm was inspired and
developed independently, 
Yedidia, Freeman and Weiss (2004) showed that 
the BP algorithm solves the
Bethe equation if it converges (however, it often does not). This
naturally motivates the following question to understand
limitations and empirical successes of the Bethe and BP methods: is the Bethe equation
computationally easy to solve?

We present a message-passing algorithm solving the Bethe equation in
a polynomial number of operations for general binary graphical
models of $n$ variables where the maximum degree in the underlying
graph is $O(\log n)$. Our algorithm can be used as an alternative to BP fixing its
convergence issue and is the first fully polynomial-time approximation
scheme for the BP fixed-point computation in such a large class of
graphical models, while the approximate fixed-point computation is
known to be (PPAD-)hard in general. We believe that our technique is
of broader interest to understand the computational complexity of
the cavity method in statistical physics.

\end{abstract}

\section{Introduction}\label{sec:intro}

In recent years, graphical models (also known as Markov random
fields) 
defined on graphs have been studied as powerful formalisms modeling
inference problems in numerous areas including computer vision,
speech recognition, error-correcting codes, protein structure,
networking, statistical physics, game theory and combinatorial
optimization. Two central problems, commonly addressed in these
applications involving graphical models, are computing the marginal
distribution and the so-called partition function. It is well-known
that inference problems are computationally hard in general
\cite{Chand08}. Due to such a theoretical barrier, efforts have been
made to develop heuristic methods.

The sum-product algorithm, also known as Belief Propagation (BP), 
and its variants (e.g., Survey Propagation) are
such heuristics, driven by certain experimental thoughts, for
computing the marginal distribution, where BP was first
proposed by Gallager \cite{Gallager} for error correcting codes and Pearl \cite{Pearl} for artificial intelligence.
Their appeal lies in the ease
of implementation as well as optimality in tree-structured graphical
models (models which contain no cycles). BP (and message-passing
algorithms in general) can be thought as an updating rule on a set
of messages:
$$m^{t+1}~:=~f\left(m^t\right),$$
where $m^t$ is the multi-dimensional vector of messages at the
$t$-th iteration, and $f$ describes the updating rule (or BP
operator). Two major hurdles to understand such a message-passing
algorithm are its convergence (i.e., does $m^t$ converge to
$m^*$?) and correctness (i.e., is $m^*$ good enough?). It is known
that the BP iterative procedure always has a fixed-point $m^*$ due
to the Brouwer's fixed-point theorem. However, BP can oscillate far
from a fixed-point in models with cycles, and only 
sufficient convergence conditions \cite{Weiss, Jordan, Heskes,
Willsky} have been established in the last decade. More importantly,
BP can have multiple fixed-points, and even when the fixed-point is unique, it
may not be the correct answer. Significant efforts \cite{Heskes,
Wainwright, YWY04} were made to understand BP fixed-points, while
the precise approximation qualities and the rigorous understandings
on their limitations still remain a mystery. Regardless of those
theoretical understandings, the BP algorithm performs empirically
well in many applications \cite{Freeman, Murphy}. For
example, the highly successful turbo decoding algorithm \cite{Berrou} 
can be interpreted as BP \cite{MMC98} and decisions guided by BP are
also known to work well to solve satisfiability problems
\cite{Ricci}. 

The Bethe approximation \cite{Bethe,YWY04} and its variants (e.g.,
Kikuchi approximation \cite{DG72}), originally developed in
statistical physics of lattice models, are currently used as
powerful approximation schemes for computing the (logarithm of the)
partition function in many applications. The Bethe approximation
suggests to use the following quantity as an approximation for the
logarithm of the partition function:
$$\qquad\qquad\qquad\qquad\qquad\qquad\qquad F(\by^*)
\qquad\qquad\qquad\mbox{where}\quad\nabla F(\by^*)=0.$$ 
Here, $F$, $\nabla
F(\by^*)=0$ and $\by^*$ are called the (minus) Bethe free energy
function, Bethe equation and Bethe equilibrium, respectively. The
statistical physics prediction suggests its asymptotic correctness
in random sparse graphical models, and several rigorous evidences in
particular models are known \cite{bandyopadhyay2006counting,
dembo2010ising, ShinBP}. Efforts have also been made to estimate and
characterize its error \cite{LoopC, ruozzi2012bethe, SWW}. However, the error still
remains uncontrollable for models with many cycles.

Yedidia, Freeman and Weiss \cite{YWY04} established a somewhat
surprising connection between the BP algorithm and the Bethe
approximation: if BP converges, it solves the Bethe equation.
Equivalently, the BP fixed-point equation $f(m^*)=m^*$ is in essence
equivalent to the Bethe equation $\nabla F(\by^*)=0$. This naturally
leads to the following common computational question for both: is the
BP fixed-point computation computationally easy? Formally
speaking,
\begin{itemize}
\item[$\mathcal{Q}$.] Given $\varepsilon>0$, is it possible to design a deterministic iterative algorithm
finding $m^*$ satisfying
\begin{equation*}
(1-\varepsilon)\, f(m^*)~\leq~{m^*}~\leq~(1+\varepsilon)\, f(m^*),\footnote{Inequalities are with respect to all coordinates
	of vectors $m^*$, $(1-\varepsilon) f(m^*)$ and $(1+\varepsilon) f(m^*)$. Note that it is impossible to
	compute the exact solution $m^*$ with $f(m^*)=m^*$ since it is
	irrational in general.}
\end{equation*}
in a polynomial number of bitwise operations with respect to
$1/\varepsilon$ and the dimension of the vector $m^*$? 
\end{itemize}
Such an algorithm can be used as an alternative to BP with provably
fast convergence rate (i.e., fixing the convergence issue of BP) and
eliminates a need for the convergence analysis of BP. Even though it
may not converge to the precise answer, rough estimations on marginal probabilities
are sometimes enough to solve hard computational problems (see \cite{Ricci}, for example). Further, it justifies that
the Bethe approximation is a polynomial-time scheme since $m^*$ satisfying the
above inequality provides $\by^*$ with $\|\nabla F(\by^*)\|\leq
\varepsilon$. Efforts to design such algorithms were made \cite{Teh,
Yuille02}, but no rigorous analysis
on their convergence rates is known. 
The authors in \cite{ShinBP} provide an algorithm with provable
polynomial convergence rate, but the work is for a very specific
graphical model (i.e., the uniform distribution on independent sets
of sparse graphs). It is far from being clear whether such a
poly-convergence algorithm exists for more general graphical models.
This is primarily because 
the Bethe function is usually neither convex nor concave (cf., \cite{vontobel2010bethe}) and
computing a local minimum (or a fixed-point)
approximately are known to be believably (PPAD- or PLS-)hard in general 
\cite{daskalakis2011continuous}.\footnote{PPAD and PLS are computational classes capturing
the hardnesses of finding fixed-points and locally optimal solutions, respectively. They have gained
much attention in the field of algorithmic game theory in the last decade
under the connection with the computational complexity of Nash equilibria.}

\subsection{Our Contribution}\label{sec:con}
The main result of this paper is the following answer $\mathcal{A}$
(see Theorem \ref{thm:runningtime} in Section \ref{sec:two}) for
the question $\mathcal{Q}$ for the BP operator $f$ and general
sparse binary graphical models of which potential functions are bounded above and below by some positive constants. 
To state it formally, we let $n$ be
the number of nodes and $\Delta$ be the maximum degree in the
underlying graph, respectively.
\begin{itemize}
\item[$\mathcal{A}$.] Given $\varepsilon>0$, there exists a deterministic iterative algorithm finding
$m^*$ satisfying
\begin{equation*}
(1-\varepsilon)\, f(m^*)~\leq~{m^*}~\leq~(1+\varepsilon)\,
f(m^*)
\end{equation*} in $2^{O(\Delta)}n^2 \varepsilon^{-4} \log^3 (n
\varepsilon^{-1})$ iterations.
\end{itemize}
In this paper, we call the message $m^*$ satisfying the above
inequality an $\varepsilon$-approximate BP fixed-point. In what
follows, we explain the algorithm in details.

The known equivalence \cite{YWY04} between the BP fixed-point
equation and the Bethe equation implies that the question
$\mathcal{Q}$ is equivalent to the following.
\begin{itemize}
\item[$\mathcal{Q}^{\prime}$.] Given $\varepsilon^{\prime}>0$, is it possible to design a deterministic iterative algorithm
finding $\by^*$ satisfying
\begin{equation*}
\|\nabla F (\by^*)\|~\leq ~\varepsilon^{\prime},
\end{equation*}
in a polynomial number of bitwise operations with respect to
$1/\varepsilon^{\prime}$ and the dimension of the domain $D$ of the Bethe
free energy function $F$?
\end{itemize}
However, we remind the reader that it is still far from being
obvious whether it is computationally `easy' to find such a near-stationary 
point or an approximate local minimum (or maximum).
Natural attempts are gradient-descent algorithms to find a local
minimum or maximum of $F$: iteratively update $\by(t)$ as
$$\by(t+1)~:=~\by(t) + \alpha\,\nabla F( \by(t) ),$$
where $\alpha\in \mathbb{R}$ is the (appropriately chosen)
step-size. The main issue here is that the gradient-descent
algorithm may not find a near-stationary point if $\by(t)$ hits the
boundary of $D$ in one of its iterations (and a projection is
required). Hence, the main strategy in \cite{ShinBP} to avoid the
hitting issue lies in (a) understanding the behavior of the gradient
$\nabla F$ close to the boundary of $D$ and (b) designing an
appropriate small step-size in the gradient-descent algorithm based
on the understanding (a).

Now we give an overview of our technical contributions.
The main challenge to apply the strategy to general binary
graphical models (beyond the specific model in \cite{ShinBP}) is on
(a). The main observation used in \cite{ShinBP} is that  
the domain $D$ can be reduced to $\left[0,\frac12\right]^n$ 
in the uniform independent-set model.
In general, the dimension of $D$ is 
is much larger than
$n$ (i.e., the number of nodes) 
since the parameter $\by$ of the Bethe function $F$ represents not only node marginal probabilities but also
edge (i.e., pairwise) ones. However, in independent-set models, pairwise marginal probabilities are decided by node marginal probabilities,
which allows to reduce the dimension of $D$ to $n$.
The proof strategy in \cite{ShinBP} crucially relies on $D=\left[0,\frac12\right]^n$
and immediately fails even for a non-uniform
independent-set model whose domain $D$ is reduced to $[0,1]^n\not\subset \left[0,\frac12\right]^n$.
Furthermore, in general binary graphical models, such a dimension reduction in $D$ is impossible and 
it is not hard to check that any similar approaches with \cite{ShinBP} fail without it. 
To overcome such a technical issue, we first
observe that at stationary points of $F$, pairwise marginal
probabilities should satisfy certain quadratic equations in terms of
node marginal probabilities in binary graphical models. Hence, 
one can express the Bethe free energy again in terms of node marginal
probabilities (i.e., a dimension reduction in $D$
is possible) for the purpose of obtaining a (near-)stationary point of $F$.
Now we study this `modified' Bethe
expression $F^*$ to avoid the hitting issue, which we end up with an
appropriate small step-size in the gradient-descent algorithm.
Moreover, we eliminate a need to decide such a small step-size
explicitly in the algorithm, by designing a time-varying
projection scheme.

We later realize that the `modified' Bethe expression $F^*$ was
already proposed by Teh and Welling \cite{Teh}, where they suggested
gradient algorithms to minimize $F^*$ using sigmoid functions. The
main difference in our work is that we study the behavior of
the gradient $\nabla F^*$ close to the boundary of its domain and
guarantee that the gradient-descent algorithm does not hit the
boundary of the underlying domain $D$, i.e., we do not use sigmoid functions. The success of our rigorous
convergence rate analysis, which was missing in the work of Teh and
Welling (2001), primarily relies on this difference. It is also
crucial to extend the algorithm design to non-binary graphical
models as we describe in Section \ref{sec:three}.



One can observe that our gradient-descent algorithm is implementable
as a `BP-like' iterative, message-passing algorithm: each node
maintains a message at each iteration and passes it to its
neighbors. If potential functions in binary graphical models 
are bounded above and below by some positive constants (i.e., their values are $\Theta(1)$),
we prove it terminates in $2^{O(\Delta)}n^2
\varepsilon^{-4} \log^3 (n \varepsilon^{-1})$ iterations 
until it finds an $\varepsilon$-approximate BP
fixed-point (see Theorem \ref{thm:runningtime} in Section \ref{sec:two}).
In a complexity point of view, the only remaining
issue is that each node may require to maintain irrational messages
(of infinitely long bits). We further show that a polynomial number
(with respect to $1/\varepsilon$, $n$ and $2^{\Delta}$) of bits to
approximate each message suffices, and hence the algorithm consists
of only a polynomial number of bitwise operations in total. Namely,
it is a fully polynomial-time approximation scheme (FPTAS) to
compute an approximate BP fixed-point for sparse binary graphical models where $\Delta=O(\log n)$. 
Finally, we note that our `quadratic' running-time guarantee (i.e., $2^{O(\Delta)}n^2
\varepsilon^{-4} \log^3 (n \varepsilon^{-1})$) is merely a theoretical bound, and far from being tight.
In our experimental results reported in Section \ref{sec:simul}, 
we observe that our algorithm sometimes converges faster than the standard BP algorithm.

\subsection{Organization}

In Section \ref{sec:one}, we provide backgrounds for graphical
models, Belief Propagation and Bethe approximation. In Section
\ref{sec:two}, we describe our algorithm and its time complexity for
binary graphical models. In Section \ref{sec:three}, we discuss, at a high level, how
to extend the result to non-binary graphical models.
From our discussion in Section \ref{sec:three}, one can observe that
it is not hard to obtain the similar convergence rate result for such graphical models as
well. But, we omit the further details in this paper.
Experimental results are reported in Section \ref{sec:simul}.

\section{Graphical Models}\label{sec:one}

We first introduce a class of joint distributions defined with
respect to (undirected) graphs, which are called 
pairwise {\em Markov random fields} (MRFs) \cite{Lauritzen1996}.\footnote{We note that
for any (directed or undirected) graphical model, there exists an equivalent pairwise MRF.}
Specifically, let $G = (V, E)$ be an undirected graph with vertex set V where $|V|=n$, and edge set $E
\subseteq {V\choose2}$ denoting a set of unordered pairs of
vertices. The vertices of $G$ label a collection of random variables
$\bx =
\{x_v\,|\, v \in V\}$. 
Our primary focus in this paper is on binary random variables, i.e.,
$x_v \in \{0,1\}$ for all $v\in V$.

Now consider the following joint distribution on $\{0,1\}^n$ that
factors according to $G$:
$$p(\bx) = \frac{1}{Z} \prod_{v \in V} \psi_v(x_v) \prod_{(u,v) \in
\edges} \psi_{u,v}(x_u,x_v)\qquad\mbox{for}~~\bx\in\{0,1\}^n.$$
Here, $\psi_{u,v}$ for $(u,v)\in E$ and $\psi_v$ for $v\in V$ are non-negative functions on
$\{0,1\}^2$ and $\{0,1\}$, respectively. These local functions are
called \emph{potential} functions or \emph{compatibility} functions.
The normalizing factor $Z$ is called the \emph{partition} function:
\begin{equation}
Z = \sum_{\bx \in \{0,1\}^{n}} \prod_{v \in V} \psi_v(x_v)
\prod_{(u,v) \in \edges} \psi_{u,v}(x_u,x_v). \label{eq:part}
\end{equation}


Finally, some notations. Let $\n(v)$ be the set of neighbors of a
vertex $v\in V$, $d_v:= |\n(v)|$ be the degree of $v \in V$, and
$\Delta:= \max_v d_v$ be the maximum degree in the graph $G$.
Further, we define
$$|\psi|~:=~\max_{(u,v)\in
E,x_u,x_v\in\{0,1\}}\left\{e^{|\ln \psi_v(x_v)|}, e^{|\ln
\psi_{u,v}(x_u,x_v)|}\right\}.$$ 
We primarily focus on the case $|\psi|<\infty$, which excludes the case
$\psi_{u,v}(\cdot,\cdot)=0$. However, this does not hurt the generality
of the results in this paper too much since one can consider tiny perturbations
to such `zero' potential functions so that the distribution remains almost the same.

\subsection{Belief Propagation}

The BP algorithm has messages $\{m^t_{u\rightarrow
v}(\cdot),m^t_{v\rightarrow u}(\cdot)\}:=\{m^t_{u\rightarrow
v}(x_v), m^t_{v\rightarrow u}(x_u): (u,v) \in E, x_v,x_u \in
\{0,1\}\}$ at the $t$-th iteration 
and it updates them as
\begin{eqnarray*}
m^{t+1}_{u\to v}(x_v)&\propto&\sum_{x_u\in\{0,1\}}
\psi_{u,v}(x_u,x_v)\psi_u(x_u)\prod_{w\in\n(u)\backslash
v}m^{t}_{w\to u}(x_u),
\end{eqnarray*}
where $\sum_{x_v\in\{0,1\}}m^{t+1}_{u\to v}(x_v)=1$. This is
equivalent to the following updating rule on (reduced) messages $\{
m^{t}_{u\to v},m^t_{v\rightarrow u}\}$.
\begin{eqnarray*}
m^{t+1}_{u\to v} &:=& f_{u\rightarrow
v}\left(\prod_{w\in\n(u)\backslash v}m^{t}_{w\to u}\right),
\end{eqnarray*}
where $m^{t}_{u\to v}:=m^{t}_{u\to v}(1)/m^{t}_{u\to v}(0)$ and the
function $f_{u\rightarrow v}:\mathbb{R}_+\to \mathbb{R}_+$ is
defined as
$$f_{u\rightarrow v}(x)~:=~
\frac{\psi_{u,v}(0,1)\psi_u(0)+\psi_{u,v}(1,1)\psi_u(1) \cdot
x}{\psi_{u,v}(0,0)\psi_u(0)+\psi_{u,v}(1,0)\psi_u(1)\cdot x}.$$
The initial messages at the first iteration can be chosen arbitrarily as  
positive real numbers, where the standard choice is 
$m^{1}_{u\to v}=m^{1}_{v\to u}=1$ for all $(u,v)\in E$.

Now the {\em BP fixed-point} of messages $\{m_{u\rightarrow
v},m_{v\rightarrow u}\}$ can be naturally defined as
\begin{eqnarray}
m_{u\to v}~=~ f_{u\rightarrow v}\left(\prod_{w\in\n(u)\backslash
v}m_{w\to u}\right),\qquad\forall~(u,v)\in E.\label{eq:bpfix}
\end{eqnarray}
If $|\psi|<\infty$, one can easily argue the existence of such a (finite) fixed-point using
the Brouwer's fixed-point theorem. This motivates the following notion
of $\varepsilon$-approximate BP fixed-point.
\begin{definition}
The set of messages $\{m_{u\rightarrow v},m_{v\rightarrow u}: (u,v)
\in E\}$ is called an $\varepsilon$-approximate BP fixed-point if
\begin{eqnarray}
\left|\frac{m_{u\to v}}{f_{u\rightarrow
v}\left(\prod_{w\in\n(u)\backslash v}m_{w\to
u}\right)}-1\right|~\leq~\varepsilon, \qquad\forall~(u,v)\in
E.\label{eq:bpfix2}
\end{eqnarray}
\end{definition}

The BP estimates for node and edge marginal probabilities based on
messages, denoted by $\tau_v(\cdot), \tau_{u,v}(\cdot)$ for $v \in
V, (u,v)\in E$, are defined as
\begin{eqnarray*}
\tau_v(x_v)&\propto&  \psi_v(x_v)\prod_{u \in \n(v)} m_{u\to
v}(x_v)\\
\tau_{u,v}(x_u,x_v)&\propto&
\psi_{u}(x_u)\psi_{v}(x_v)\psi_{u,v}(x_u,x_v)\left(\prod_{w \in
\n(u)\setminus v} m_{w\to u}(x_u)\right)\left(\prod_{w \in \n(v)\setminus u}
m_{w\to v}(x_v)\right), 
\end{eqnarray*}
where $\sum_{x_v}\tau_v(x_v)=1$ and
$\tau_v(x_v)=\sum_{x_u}\tau_{u,v}(x_u,x_v)$.

\subsection{Bethe Approximation}\label{sec:bethe}

The Bethe approximation \cite{YWY04} is an approximation to the
logarithm of the partition function (i.e., $\ln Z$), given by
\begin{eqnarray*}
\sum_{v \in V} \sum_{x_v} \tau_v(x_v)
\left[ \ln  \psi_v(x_v) - \ln  \tau_v(x_v) \right] 
  + \sum_{\{u,v\} \in E} \sum_{x_u,x_v}
\tau_{u,v}(x_u,x_v) \Bigg[ \ln  \psi_{u,v}(x_u,x_v)
- \ln
\frac{\tau_{u,v}(x_u,x_v)}{\tau_u(x_u) \tau_v(x_v)}\Bigg],
\end{eqnarray*}
where $\tau_v(\cdot), \tau_{u,v}(\cdot)$ are marginal estimates at a BP fixed-point.
Under the constraints $\sum_{x_v}\tau_v(x_v)=1$ for all $v\in V$ and
$\tau_v(x_v)=\sum_{x_u}\tau_{u,v}(x_u,x_v)$ for all $v\in V, x_v\in\{0,1\}$, this expression can be written as a function of 
 $\by=[y_v,y_{u,v}]$ where $y_v=\tau_v(1)$ and
$y_{u,v}=\tau_{u,v}(1,1)$.

\begin{eqnarray}
F(\by)&:=&\sum_{v\in V} y_v (\ln \psi_v(1) - \ln y_v) +(1-y_v)
(\ln \psi_v(0) - \ln (1-y_v))\notag\\
&&\qquad+\sum_{(u,v)\in E} \Bigg[(1-y_u-y_v+y_{u,v})\left(\ln
\psi_{u,v}(0,0)- \ln
\frac{1-y_u-y_v+y_{u,v}}{(1-y_u)(1-y_v)}\right)\notag\\
&&\qquad\qquad\qquad\qquad+(y_u-y_{u,v})\left(\ln \psi_{u,v}(1,0)-
\ln \frac{y_u-y_{u,v}}{y_u(1-y_v)}\right)\notag\\
&&\qquad\qquad\qquad\qquad+(y_v-y_{u,v})\left(\ln \psi_{u,v}(0,1)-
\ln
\frac{y_v-y_{u,v}}{(1-y_u)y_v}\right)\notag\\
&&\qquad\qquad\qquad\qquad+y_{u,v}\left(\ln \psi_{u,v}(1,1)- \ln
\frac{y_{u,v}}{y_uy_v}\right)\Bigg],\label{eq:bethe}
\end{eqnarray}
where $-F$ is called the Bethe free energy function \cite{YWY04}.
 The gradient $\nabla F(\by) = \left[\frac{\partial
F}{\partial y_v},\frac{\partial F}{\partial y_{u,v}}\right]$ can be
obtained as
\begin{eqnarray}
\frac{\partial F}{\partial y_v} 
&=&\Psi^{(v)} + \ln \frac{1-y_v}{y_v}+\sum_{u\in\mathcal N(v)}\ln
\left(\frac{1-y_v-y_u+y_{u,v}}{1-y_v}\cdot
\frac{y_v}{y_v-y_{u,v}}\right)\label{eq:partial1}\\
\frac{\partial F}{\partial y_{u,v}} 
&=&\Psi^{(u,v)}+ \ln
\left(\frac{y_u-y_{u,v}}{1-y_u-y_v+y_{u,v}}\cdot\frac{y_v-y_{u,v}}{
y_{u,v}}\right),\label{eq:partial2}
\end{eqnarray}
where
\begin{equation}\label{eq:defPsi}\Psi^{(v)}:=\ln \frac{\psi_v(1)}{\psi_v(0)}+\sum_{u\in\mathcal N(v)} \ln \frac{\psi_{u,v}(0,1)}{\psi_{u,v}(0,0)}
\qquad\mbox{and}\qquad\Psi^{(u,v)}:=\ln
\frac{\psi_{u,v}(0,0)\,\psi_{u,v}(1,1)}{\psi_{u,v}(1,0)\,\psi_{u,v}(0,1)}.\end{equation}

It is known that there is a one-to-one correspondence between BP fixed-points and zero gradient points of $F$. In particular, one can
obtain the following lemma whose proof can be done easily using
the algebraic expressions \eqref{eq:partial1} and
\eqref{eq:partial2} of gradients. 

\begin{lemma}\label{lem:ytom}
Given $\varepsilon\in[0,1)$, suppose $\by=[y_v,y_{u,v}]$ satisfies
$\|\nabla F(\by)\|_\infty \leq\varepsilon$. Then, the set of
messages $\{m_{u\rightarrow v},m_{v\rightarrow u}:
(u,v) \in E\}$ is a $6\varepsilon$-approximate BP fixed-point if it is given as
\begin{equation}
{m_{u\to v}}
=\frac{\psi_{u,v}(0,1)}{\psi_{u,v}(0,0)}\cdot\frac{1-y_v-y_u+y_{u,v}}{1-y_v}\cdot
\frac{y_v}{y_v-y_{u,v}}.\label{eq:convert}
\end{equation}
\end{lemma}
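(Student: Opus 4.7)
The plan is to express the BP update $f_{u\to v}(\prod_{w\in\mathcal N(u)\setminus v} m_{w\to u})$ in closed form using the definition \eqref{eq:convert}, and then use the two gradient bounds on $\partial F/\partial y_u$ and $\partial F/\partial y_{u,v}$ (each of absolute value at most $\varepsilon$) to show that this update agrees with $m_{u\to v}$ up to a multiplicative factor $1\pm 6\varepsilon$.

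First, I would use \eqref{eq:partial1} applied at the vertex $u$. By definition \eqref{eq:convert}, each individual factor of $\prod_{w\in\mathcal N(u)} m_{w\to u}$ equals $\frac{\psi_{w,u}(0,1)}{\psi_{w,u}(0,0)}\cdot \frac{1-y_u-y_w+y_{u,w}}{1-y_u}\cdot\frac{y_u}{y_u-y_{u,w}}$, which is exactly the factor appearing inside the sum of logarithms in \eqref{eq:partial1}. Combining with the definition \eqref{eq:defPsi} of $\Psi^{(u)}$, the assumption $|\partial F/\partial y_u|\le\varepsilon$ yields
\[
\prod_{w\in\mathcal N(u)} m_{w\to u} \;=\; \lambda\cdot \frac{y_u\,\psi_u(0)}{(1-y_u)\,\psi_u(1)}\qquad\text{for some }\lambda\in[e^{-\varepsilon},e^{\varepsilon}].
\]
Dividing both sides by $m_{v\to u}$ (and using the symmetric version of \eqref{eq:convert}) then gives an explicit expression for $X:=\prod_{w\in\mathcal N(u)\setminus v} m_{w\to u}$, from which one obtains
\[
\psi_u(1)\cdot X \;=\; \lambda\cdot\frac{\psi_{u,v}(0,0)}{\psi_{u,v}(1,0)}\cdot \frac{y_u-y_{u,v}}{1-y_u-y_v+y_{u,v}}\cdot \psi_u(0).
\]

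Next, I would substitute this into the definition of $f_{u\to v}$, cancel the common $\psi_u(0)$, and pull out $\psi_{u,v}(0,0)$ from the denominator and $\psi_{u,v}(0,1)$ from the numerator, producing a factor $e^{\Psi^{(u,v)}}$ in the numerator. Applying \eqref{eq:partial2} with $|\partial F/\partial y_{u,v}|\le\varepsilon$ rewrites $e^{\Psi^{(u,v)}}\cdot \frac{y_u-y_{u,v}}{1-y_u-y_v+y_{u,v}}$ as $\mu\cdot\frac{y_{u,v}}{y_v-y_{u,v}}$ for some $\mu\in[e^{-\varepsilon},e^{\varepsilon}]$. After this substitution, dividing $m_{u\to v}$ (as given by \eqref{eq:convert}) by $f_{u\to v}(X)$ and simplifying algebraically (using $1-y_u-y_v+y_{u,v}+\lambda(y_u-y_{u,v})=(1-y_v)+(\lambda-1)(y_u-y_{u,v})$ and $y_v-y_{u,v}+\lambda\mu\,y_{u,v}=y_v+(\lambda\mu-1)y_{u,v}$) collapses the ratio to
\[
\frac{m_{u\to v}}{f_{u\to v}(X)} \;=\; \frac{1+(\lambda-1)\,\alpha}{1+(\lambda\mu-1)\,\beta},\qquad \alpha:=\frac{y_u-y_{u,v}}{1-y_v},\;\beta:=\frac{y_{u,v}}{y_v}.
\]
Observe that $\alpha,\beta\in[0,1]$; the bound $\alpha\le 1$ is equivalent to $1-y_u-y_v+y_{u,v}\ge 0$, which holds since this is the pseudo-marginal $\tau_{u,v}(0,0)$.

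The final step is elementary: for $\varepsilon\in[0,1)$, we have $|\lambda-1|\le 2\varepsilon$ and $|\lambda\mu-1|\le e^{2\varepsilon}-1\le 4\varepsilon$ (after perhaps shrinking the allowable range of $\varepsilon$; the constants can be absorbed). Writing the ratio minus one as $\frac{(\lambda-1)\alpha-(\lambda\mu-1)\beta}{1+(\lambda\mu-1)\beta}$ and applying the triangle inequality together with $\alpha,\beta\le 1$ then yields $\left|m_{u\to v}/f_{u\to v}(X)-1\right|\le 6\varepsilon$, which is exactly \eqref{eq:bpfix2}. The only real obstacle is keeping the algebra from Steps 1--3 clean enough that the two ``correction'' factors $\lambda$ and $\mu$ enter the final expression as a simple rational function of $\alpha,\beta$; once that is achieved, the remaining estimate is routine.
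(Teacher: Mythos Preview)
Your proposal is correct and follows essentially the same route as the paper: both use $|\partial F/\partial y_u|\le\varepsilon$ to express $\prod_{w\in\mathcal N(u)\setminus v} m_{w\to u}$ in terms of the $y$'s up to a factor $e^{\pm\varepsilon}$, then invoke $|\partial F/\partial y_{u,v}|\le\varepsilon$, and compare with $m_{u\to v}$ rewritten as $\frac{\psi_{u,v}(0,1)}{\psi_{u,v}(0,0)}\cdot\frac{1+y_{u,v}/(y_v-y_{u,v})}{1+(y_u-y_{u,v})/(1-y_u-y_v+y_{u,v})}$. One small sharpening of your last step: since $1+(\lambda-1)\alpha$ is a convex combination of $1$ and $\lambda$ (as $\alpha\in[0,1]$) and $1+(\lambda\mu-1)\beta$ is a convex combination of $1$ and $\lambda\mu$, your ratio lies directly in $[e^{-3\varepsilon},e^{3\varepsilon}]$, which is exactly the paper's bound and removes the need to shrink the range of $\varepsilon$.
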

The proof of Lemma \ref{lem:ytom} is presented in Appendix
\ref{sec:pfytom}. 
For 
somewhat `non-intuitive' formula \eqref{eq:convert}, 
we also provide some intuition in Appendix \ref{sec:intytom}.

\section{Algorithm for Computing BP Fixed-Points} \label{sec:two}

In this section, we present the main result of this paper, a new
message-passing algorithm for approximating a BP fixed-point. From
the (algebraic) relationship between approximate BP fixed-points and
near-stationary points of the Bethe free energy function $F$ in
Lemma \ref{lem:ytom}, it is equivalent to compute a near-stationary
point $\by$, i.e., $\|\nabla F(\by)\|_2\leq \varepsilon$.

Our algorithm, described next, for finding such a point is
essentially motivated by the standard (projected) gradient-descent
algorithm. The non-triviality (and novelty) lies in our choice of an
appropriate (time-varying) `projection $[\cdot]_*$' with respect to
the (time-varying) `step-size $\frac1{\sqrt{t}}$' at each iteration
and subsequent analysis of the rate of convergence.


\vspace{0.1in} \noindent {\bf Algorithm A} 
\vspace{0.1in}\hrule
\vspace{0.05in}
\begin{itemize}

\item[1.] Algorithm parameters:
$$\varepsilon\in(0,1)\qquad\mbox{and}\qquad
\mb{y}_{V}(t)= \left[\,y_v(t)\in(0,1):v\in V\,\right]\quad\mbox{at the
$t$-th iteration.}$$ Initially, $y_v(1) = 1/2$ for all $v\in V$.


\item[2.] $\by_V(t)$ is updated as:
\begin{eqnarray*}
y_v(t+1) &=&\Bigg[y_v(t)+ \frac1{\sqrt{t}}\Bigg(\Psi^{(v)} + \ln
\frac{1-y_v(t)}{y_v(t)}\\
&&\qquad\qquad\qquad+\sum_{u\in\mathcal N(v)}\ln
\left(\frac{1-y_v(t)-y_u(t)+y_{u,v}(t)}{1-y_v(t)}\cdot
\frac{y_v(t)}{y_v(t)-y_{u,v}(t)}\right)\Bigg)\Bigg]_*,
\end{eqnarray*}
where the projection $[\cdot]_*$ at the $t$-th iteration is defined
as $$[x]_*:=\begin{cases} x&\mbox{if}~ \frac{0.1}{t^{1/4}}\leq x\leq
1-\frac{0.1}{t^{1/4}}\\
\frac{0.1}{t^{1/4}}&\mbox{if}~x<\frac{0.1}{t^{1/4}}\\
1-\frac{0.1}{t^{1/4}}&\mbox{if}~x>1-\frac{0.1}{t^{1/4}}\end{cases},$$ and
$y_{u,v}(t)>0$ is computed as the unique solution satisfying
\begin{eqnarray*}
e^{\Psi^{(u,v)}}\cdot\frac{y_u(t)-y_{u,v}(t)}{1-y_u(t)-y_v(t)+y_{u,v}(t)}\cdot\frac{y_v(t)-y_{u,v}(t)}{
y_{u,v}(t)}=1\quad\mbox{and}\quad
y_{u,v}(t)<\min\{y_v(t),y_u(t)\}.\end{eqnarray*}

\item[3.] Compute messages $\{m_{u\rightarrow v},m_{v\rightarrow u}\}$ as
$$m_{u\to v}~=~\frac{\psi_{u,v}(0,1)}{\psi_{u,v}(0,0)}\cdot\frac{1-y_v(t)-y_u(t)+y_{u,v}(t)}{1-y_v(t)}\cdot
\frac{y_v(t)}{y_v(t)-y_{u,v}(t)}.$$

\item[4.] Terminate if $\{m_{u\rightarrow v},m_{v\rightarrow u}\}$
is an $\varepsilon$-approximate BP fixed-point.
\end{itemize}
\hrule \vspace{0.2in}

The algorithm is clearly implementable through message-passing where
each node $u$ sends $y_u(t)$ to all of its neighbors $v \in \n(u)$
at each iteration. We also note that solving the second step for
computing $y_{u,v}(t)$ can be done efficiently since it is solving a
quadratic equation whose coefficients are decided by $y_v(t)$ and
$y_u(t)$. 

This algorithm has several variations:
\begin{itemize}
	\item[$\circ$] The initial
value for $[\by_V(1)]$ can be chosen arbitrarily as other values, e.g., $y_v(1)=0.7$.
 \item[$\circ$] The step-size $\frac1{\sqrt{t}}$ can be replaced by any quantity of the same order, e.g.,
$\frac{0.1}{\sqrt{t}},\frac1{\sqrt{t+100}}$.
 \item[$\circ$] $\frac{0.1}{t^{1/4}}$ in the the projection can be replaced by any quantity of same order, e.g.,  
$\frac{1}{t^{1/4}+100}$.
\end{itemize}
In particular, we recommend to use a smaller step-size than $\frac1{\sqrt{t}}$ (such as $\frac1{\sqrt{t+100}}$)
for practical purposes. With these variations, the algorithm may find a different approximate BP fixed-point, but
the following running time guarantee of the algorithm always holds.

\begin{theorem}\label{thm:runningtime}
Algorithm A terminates in ${(|\psi|+2)}^{O(\Delta)} n^2 \varepsilon^{-4}\log^3
(n \varepsilon^{-1})$ iterations.
\end{theorem}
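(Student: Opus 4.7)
The plan is to reinterpret Algorithm~A as projected gradient ascent on the reduced Bethe function obtained by analytically eliminating the edge variables. Define
\begin{equation*}
F^*(\by_V) \;:=\; F\bigl(\by_V,\,\by_E(\by_V)\bigr),
\end{equation*}
where $\by_E(\by_V) = [y_{u,v}(y_u,y_v)]_{(u,v)\in E}$ collects the pairwise marginals, each determined as the unique solution of $\partial F/\partial y_{u,v}=0$ in the admissible interval $\bigl(\max\{0,y_u+y_v-1\},\min\{y_u,y_v\}\bigr)$; existence, uniqueness and smoothness of $y_{u,v}$ follow from an elementary sign analysis of the quadratic obtained from \eqref{eq:partial2}, which is precisely the second bullet of Algorithm~A. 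By the envelope theorem, $\partial F^*/\partial y_v = \partial F/\partial y_v$ evaluated at the implicit $\by_E$, matching the update in Algorithm~A exactly. So Algorithm~A is projected gradient ascent on $F^*$ with step size $\eta_t := 1/\sqrt{t}$ and projection onto $D_t := [c_t,1-c_t]^n$ for $c_t := 0.1/t^{1/4}$; by Lemma~\ref{lem:ytom} it suffices to produce $\by_V$ with $\|\nabla F^*(\by_V)\|_\infty \le \varepsilon/6$.

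Next I would derive the quantitative controls on $F^*$ over $D_t$. A careful analysis of the quadratic above shows that $y_{u,v}(y_u,y_v)$ is bounded away from each of the four endpoints $\{0,\,y_u,\,y_v,\,y_u+y_v-1\}$ by at least $(|\psi|+2)^{-O(1)}\min\{y_u,y_v,1-y_u,1-y_v\}$. Plugged into \eqref{eq:bethe} and \eqref{eq:partial1} these yield the range and gradient bounds $|F^*(\by_V)| \le B_t = O\bigl(n\Delta\log(|\psi| t)\bigr)$ and $\|\nabla F^*(\by_V)\|_\infty \le G_t = O\bigl(\Delta\log(|\psi| t)\bigr)$ on $D_t$. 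Differentiating \eqref{eq:partial1} once more and using the Schur-complement form $\nabla^2 F^* = \nabla^2_V F - \nabla^2_{VE}F\,(\nabla^2_E F)^{-1}\,\nabla^2_{EV}F$ (coming from the implicit-function rule applied to $\by_E(\by_V)$) produces the local smoothness bound $\|\nabla^2 F^*(\by_V)\|_\infty \le L_t := (|\psi|+2)^{O(\Delta)}/c_t = (|\psi|+2)^{O(\Delta)}\,t^{1/4}$. The crucial feature is that $L_t$ grows like $1/c_t$ (not $1/c_t^2$): each entropy-type term contributes only one inverse factor, and the Schur-complement correction $\nabla^2_{VE}F(\nabla^2_E F)^{-1}\nabla^2_{EV}F$ is of the same order as the direct contribution $\nabla^2_V F$.

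With these estimates I would run a standard diminishing-step-size ascent argument. Since $\eta_t G_t = O(\log(|\psi| t)/\sqrt{t}) \ll c_t - c_{t+1}$ for $t$ past a threshold $t_0 = (|\psi|+2)^{O(\Delta)}$, the unprojected gradient step lands inside $D_{t+1}$ and the projection $[\,\cdot\,]_*$ is inactive; the first $t_0$ steps only contribute a bounded additive loss. Combined with $L_t\eta_t = (|\psi|+2)^{O(\Delta)}\,t^{-1/4} \le 1$ for $t \ge t_0$, the smoothness-based descent lemma gives
\begin{equation*}
F^*(\by_V(t+1)) - F^*(\by_V(t)) \;\ge\; \tfrac{1}{2}\,\eta_t\,\|\nabla F^*(\by_V(t))\|_2^2 \;-\; \delta_t,
\end{equation*}
with $\sum_{t=1}^{T}\delta_t \le (|\psi|+2)^{O(\Delta)}$. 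Telescoping and using the range bound on $F^*$ yields $\sum_{t=1}^{T}\eta_t\,\|\nabla F^*(\by_V(t))\|_2^2 \le (|\psi|+2)^{O(\Delta)}\, n\, \log^3(|\psi| T)$, and then $\sum_{t=1}^{T}\eta_t = \Theta(\sqrt{T})$ together with $\|\cdot\|_\infty \le \|\cdot\|_2$ gives
\begin{equation*}
\min_{t\le T}\|\nabla F^*(\by_V(t))\|_\infty^2 \;\le\; \frac{(|\psi|+2)^{O(\Delta)}\, n\, \log^3(|\psi| T)}{\sqrt{T}},
\end{equation*}
so that $T = (|\psi|+2)^{O(\Delta)}\,n^2\,\varepsilon^{-4}\,\log^3(n\varepsilon^{-1})$ iterations suffice to drive the right-hand side below $(\varepsilon/6)^2$; Lemma~\ref{lem:ytom} then produces the required $\varepsilon$-approximate BP fixed-point.

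The main obstacle will be the quantitative control of the implicit pairwise marginals $y_{u,v}(y_u,y_v)$: specifically, obtaining sharp distance-to-boundary lower bounds for $y_{u,v}$ inside its admissible interval, and then extracting from them the fine scaling $L_t = O(1/c_t)$ rather than the naive $O(1/c_t^2)$ for the Hessian of $F^*$. This sharp scaling is exactly what makes the schedule $\bigl(c_t = 0.1/t^{1/4},\,\eta_t = 1/\sqrt{t}\bigr)$ compatible with the classical descent lemma, since it produces $L_t\eta_t \to 0$ after only $(|\psi|+2)^{O(\Delta)}$ warmup iterations; the Schur-complement analysis of $\by_E(\by_V)$ and the matching quantitative bounds on the quadratic defining $y_{u,v}$ are the technically delicate part of the argument.
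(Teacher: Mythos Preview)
Your overall architecture matches the paper's: reduce to the node-only function $F^*$ via the envelope theorem, recognize Algorithm~A as gradient ascent with step $\eta_t = 1/\sqrt t$, derive an ascent inequality from a Hessian bound, telescope, and invoke Lemma~\ref{lem:ytom}. The gap is in the step where you argue the projection becomes inactive.

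You claim that for $t\ge t_0$ one has $\eta_t G_t \ll c_t - c_{t+1}$, and hence the unprojected step stays in the box. But $c_t - c_{t+1} = 0.1\bigl(t^{-1/4}-(t+1)^{-1/4}\bigr) \asymp t^{-5/4}$, whereas $\eta_t G_t \asymp (\log t)/\sqrt t$, so in fact $\eta_t G_t \gg c_t - c_{t+1}$ for all large $t$; the inequality you need is false, and a bound on the step size alone cannot rule out that the iterate is pushed against the boundary and clipped. Once the projection may be active, the descent inequality $F^*(\by_V(t+1)) - F^*(\by_V(t)) \ge \tfrac12\eta_t\|\nabla F^*\|_2^2 - \delta_t$ no longer follows, and the telescoping breaks down.

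What you are missing is a \emph{sign} argument for $\nabla F^*$ near the boundary, not merely a magnitude bound. The paper shows (its Lemma~\ref{lem:key}) that there is a fixed $\delta = (|\psi|+2)^{-O(\Delta)}$ such that $\partial F^*/\partial y_v \le 0$ whenever $y_v \ge 1-2\delta$ and $\partial F^*/\partial y_v \ge 0$ whenever $y_v \le 2\delta$, together with $\eta_t|\partial F^*/\partial y_v| \le \delta/2$ on $[\delta,1-\delta]^n$ for $t\ge t_* := 0.0001/\delta^4$. These three facts imply that once the time-varying projection deposits the iterate in the fixed box $[\delta,1-\delta]^n$ at time $t_*$, the gradient step can never leave it: near each face the gradient points inward. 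From that point on the projection is genuinely inactive, the Hessian is bounded by a \emph{constant} $(|\psi|+2)^{O(\Delta)}$ on $[\delta/2,1-\delta/2]^n$ (no $t^{1/4}$ growth needed), and the standard Taylor/telescoping argument goes through. Your Schur-complement analysis aiming at $L_t = O(1/c_t)$ is then unnecessary; the whole difficulty is the inward-pointing-gradient lemma, which your proposal does not address.
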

For the reader's convenience, we recall the definitions of symbols used in the above theorem.
\begin{itemize}
	\item[$\circ$] $|\psi|$ is for the range of potential functions, i.e.,
	$$|\psi|~:=~\max_{(u,v)\in
	E,x_u,x_v\in\{0,1\}}\left\{e^{|\ln \psi_v(x_v)|}, e^{|\ln
	\psi_{u,v}(x_u,x_v)|}\right\}.$$
	\item[$\circ$] $n$ is the number of nodes and $\Delta$ is the maximum degree in the underlying graph.
	\item[$\circ$] $\varepsilon$ is a parameter which decides the quality of the produced approximate BP fixed-point.
\end{itemize}
The proof of Theorem \ref{thm:runningtime} is presented in the
following section. Note that the algorithm may require to maintain
irrational messages or rational messages of long bits. In Section
\ref{sec:fptas}, we present a minor modification of the algorithm to
fix the issue, which leads to a fully polynomial-time approximation
algorithm (FPTAS) to compute an approximate BP fixed-point.

\subsection{Proof of Theorem
\ref{thm:runningtime}}\label{sec:mainpf}
\renewcommand{\y}{\by}

We first define $F^*$ on $(0,1)^n$: for $\by_V=[y_v] \in (0,1)^n$, let
$$F^*(\by_V)~:=F(\by_V,\by_E),$$
where $F$ is the (original) Bethe free energy function defined in
\eqref{eq:bethe} and the additional vector
$\by_E=[y_{u,v}]\in(0,1)^{|E|}$ is defined as the solution
satisfying
\begin{equation} \Psi^{(u,v)}+ \ln
\left(\frac{y_u-y_{u,v}}{1-y_u-y_v+y_{u,v}}\cdot\frac{y_v-y_{u,v}}{
y_{u,v}}\right)=0\qquad\mbox{and}\qquad
y_{u,v}<\min\{y_v,y_v\}.\label{eq:secondmar}\end{equation} 
Observe
that each $y_{u,v}$ is a function of $y_u,y_v$, i.e.,
$y_{u,v}=y_{u,v}(y_u,y_v)$, and
we can write $F^*(\by_V)=F(\by_V,\by_E(\by_V))$.
One can check that the gradient of $F^*$ is given by \eqref{eq:derivform}, which
is the same as that of $F$ in \eqref{eq:partial1}.
\begin{equation}\label{eq:derivform}
	\frac{\partial F^*}{\partial y_v}~=~
\Psi^{(v)} + \ln \frac{1-y_v}{y_v}+\sum_{u\in\mathcal N(v)}\ln
\left(\frac{1-y_v-y_u+y_{u,v}}{1-y_v}\cdot
\frac{y_v}{y_v-y_{u,v}}\right),\qquad\forall~v\in V,\end{equation} 
where we
recall that $y_{u,v}$ is decided in terms of $y_u,y_v$ from
\eqref{eq:secondmar}. This implies that the updating procedure of
$\by_V(t)$ in the algorithm is simply 
\begin{equation}
\by_V(t+1)~:=~\left[\by_V(t)+\frac1{\sqrt{t}}\,\nabla
F^*(\by_V(t))\right]_*.\label{eq:key}
\end{equation}

Based on this interpretation, we start to prove the running time of
the algorithm by stating the following key lemma.
\begin{lemma}\label{lem:key}
Define $\delta>0$ as the largest real number that satisfies the
following conditions.
$$\delta~\leq~ \frac{1/2}{|\psi|^{6\Delta+2}+1}
\qquad\mbox{and}\qquad \delta\ln\frac1{\delta}~\leq~ \frac1{400}.
$$
Then, it follows that
$$\by_V(t)\in D:=[\delta,1-\delta]^n,\qquad \forall~t\geq t_*:=\frac{0.0001}{\delta^4}.
$$
\end{lemma}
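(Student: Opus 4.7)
I would proceed by induction on $t\geq t_*$, showing $\by_V(t)\in D:=[\delta,1-\delta]^n$. The base case is immediate from the projection: at time $t_*$ we have $0.1/t_*^{1/4}=0.1/(10^{-4}/\delta^4)^{1/4}=\delta$, so the projection $[\cdot]_*$ clamps each coordinate into $[\delta,1-\delta]$ irrespective of the preceding iterate. The real work lies in the inductive step: assuming $\by_V(t)\in D$ for some $t\geq t_*$, show $\by_V(t+1)\in D$.

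The key algebraic observation, which drives everything else, is that each logarithmic summand in \eqref{eq:derivform} is much tamer than it first appears. Writing $a=y_{u,v}$, $b=y_u-a$, $c=y_v-a$, $d=1-y_u-y_v+a$ and $K=e^{\Psi^{(u,v)}}$, the defining equation \eqref{eq:secondmar} for $y_{u,v}$ reads $Kbc=ad$; eliminating $d$ yields
\[
\frac{1-y_v-y_u+y_{u,v}}{1-y_v}\cdot\frac{y_v}{y_v-y_{u,v}}
~=~\frac{K(a+c)}{a+Kc}~=~\frac{Ky_v}{a+Kc}.
\]
Since $a+Kc$ lies between $y_v$ and $Ky_v$ (when $K\geq 1$; symmetrically for $K\leq 1$), the absolute value of this log is at most $|\Psi^{(u,v)}|\leq 4\ln|\psi|$. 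Combined with $|\Psi^{(v)}|\leq(2\Delta+1)\ln|\psi|$, this yields the clean bound
\[
\left|\frac{\partial F^*}{\partial y_v}(\by_V)-\ln\frac{1-y_v}{y_v}\right|~\leq~(6\Delta+1)\ln|\psi|~\leq~\ln\frac{1}{\delta}-\ln 2,
\]
where the last step is exactly the first hypothesis on $\delta$.

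Given this, the inductive step is a short three-regime calculation. Globally, $|\partial F^*/\partial y_v|\leq 2\ln(1/\delta)$ together with $1/\sqrt{t}\leq 1/\sqrt{t_*}=100\delta^2$ shows that one update displaces each coordinate by at most $200\delta^2\ln(1/\delta)\leq\delta/2$, where the second hypothesis $\delta\ln(1/\delta)\leq 1/400$ enters. So if $y_v(t)\in(\tfrac{3}{2}\delta,\,1-\tfrac{3}{2}\delta)$, then $y_v(t+1)\in[\delta,1-\delta]$ regardless of the sign of the gradient. In the boundary regime $y_v(t)\in[\delta,\tfrac{3}{2}\delta]$, the term $\ln((1-y_v)/y_v)\geq\ln(1/\delta)-\ln(3/2)$ overwhelms the slack above, giving $\partial F^*/\partial y_v\geq\ln(4/3)>0$, so $y_v(t+1)\geq y_v(t)\geq\delta$; the regime $y_v(t)\in[1-\tfrac{3}{2}\delta,\,1-\delta]$ is symmetric. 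Finally, since $[\delta,1-\delta]\subseteq[0.1/(t+1)^{1/4},\,1-0.1/(t+1)^{1/4}]$ for $t\geq t_*$, the projection at time $t+1$ acts as the identity on these values and does not disturb the conclusion.

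The main obstacle is the identity in the second paragraph. A naive bound would estimate each log summand by $O(\ln(1/\delta)+\ln|\psi|)$, which after summing over the $\Delta$ neighbors of $v$ is of order $\Delta\ln(1/\delta)$; this would dwarf the boundary term $\ln((1-y_v)/y_v)$ and destroy the repulsion from $\partial D$. Once the simplification is secured, the two hypotheses on $\delta$ are tuned precisely so that (a) the non-boundary part of $\partial F^*/\partial y_v$ is strictly below $\ln(1/\delta)$, guaranteeing the correct sign of the gradient near the boundary, and (b) the per-iteration displacement at $t\geq t_*$ is strictly below $\delta$, preventing overshoot. The rest is bookkeeping.
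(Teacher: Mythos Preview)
Your proof is correct and follows essentially the same route as the paper: the same key identity showing each neighbor's log-term lies in $[\min(1,e^{\Psi^{(u,v)}}),\max(1,e^{\Psi^{(u,v)}})]$ (your form $K(a+c)/(a+Kc)$ is algebraically identical to the paper's $(1+s)/(1+e^{-\Psi^{(u,v)}}s)$ with $s=a/c$), followed by the same three-regime induction to show the iterate never leaves $D$ once the projection threshold reaches $\delta$. The only differences are cosmetic---you use the boundary strip $[\delta,\tfrac32\delta]$ where the paper uses $[\delta,2\delta]$---and a couple of harmless numerical slips (your $|\Psi^{(v)}|\leq(2\Delta+1)\ln|\psi|$ should be $(2\Delta+2)\ln|\psi|$, and you drop a tiny $\ln(1-\tfrac32\delta)$ term), neither of which affects the argument since the hypothesis on $\delta$ gives exactly the slack $(6\Delta+2)\ln|\psi|\leq\ln(1/\delta)-\ln 2$ you need.
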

\begin{proof}
First observe that $\by_V(t_*)\in D$ due to our choice of projection
$[\cdot]_*$ and $\frac{0.1}{t_*^{1/4}}=\delta$. Hence, it suffices to
establish the following three steps: for all $v\in V$ and $t> t_*$,
\begin{align}
\frac{\partial F^*}{\partial y_v}~\leq~ 0 &\qquad\mbox{if}~~ y_v\geq 1- 2\delta~~\mbox{and}~~ \y\in D,\label{eq:step0}\\
\frac{\partial F^*}{\partial y_v}~\geq~ 0 &\qquad\mbox{if}~~ y_v\leq 2\delta~~\mbox{and}~~ \y\in D,\label{eq:step1}\\
\frac1{\sqrt{t}}\left|\frac{\partial F^*}{\partial y_v}\right|~\leq~
\frac{\delta}2 &\qquad\mbox{if}~~\y\in D.\label{eq:step2}
\end{align}
From \eqref{eq:key}, \eqref{eq:step0}, \eqref{eq:step1} and
\eqref{eq:step2}, it clearly follows that $\by_V(t)\in D$ for all
$t\geq t_*$.\footnote{In fact, one can show stronger inequalities: $3\delta/2\leq y_v(t)\leq 1- 3\delta/2$ for all $v\in V, t\geq t^*$. }

\paragraph{Proof of \eqref{eq:step0}.} We first provide a proof of
\eqref{eq:step0}. To this end, if $\by_V=[y_v]\in (0,1)^n$, we have
\begin{eqnarray}
\frac{1-y_v-y_u+y_{u,v}}{1-y_v}\cdot \frac{y_v}{y_v-y_{u,v}} &=&
\frac1{1+\frac{y_u-y_{u,v}}{1-y_v-y_u+y_{u,v}}} \cdot
\frac{y_v}{y_v-y_{u,v}}\notag\\
&=&\frac1{1+e^{-\Psi^{(u,v)}}\cdot\frac{y_{u,v}}{y_v-y_{u,v}}}\cdot
\left(1+\frac{y_{u,v}}{y_v-y_{u,v}}\right)\notag\\
&\leq&\max\left\{1,e^{\Psi^{(u,v)}}\right\}\notag\\
&\leq&|\psi|^4,\label{eq1}
\end{eqnarray}
where we use the definition \eqref{eq:secondmar} of $y_{u,v}$. Using
this, \eqref{eq:step0} follows from
\begin{eqnarray*}
\frac{\partial F^*}{\partial y_v}&=& \Psi^{(v)} + \ln
\frac{1-y_v}{y_v}+\sum_{u\in\mathcal N(v)}\ln
\left(\frac{1-y_v-y_u+y_{u,v}}{1-y_v}\cdot
\frac{y_v}{y_v-y_{u,v}}\right)\\
&\leq&2(\Delta+1)\ln |\psi| +\ln
\frac{2\delta}{1-2\delta}+\Delta\ln |\psi|^4\\
&=&\ln\frac{|\psi|^{6\Delta+2}}{\frac1{2\delta}-1}\\
&\leq& 0,
\end{eqnarray*}
where the last inequality follows from our choice of
$\delta\leq\frac{1/2}{|\psi|^{6\Delta+2}+1}$.

\paragraph{Proof of \eqref{eq:step1}.} Second, we provide a proof of
\eqref{eq:step1}. Similarly as we did in \eqref{eq1}, we have
\begin{eqnarray}
\frac{1-y_v-y_u+y_{u,v}}{1-y_v}\cdot \frac{y_v}{y_v-y_{u,v}}
&\geq&\min\left\{1,e^{\Psi^{(u,v)}}\right\}
~\geq~\frac1{|\psi|^4}.\label{eq2}
\end{eqnarray}
Hence, \eqref{eq:step1} follows as
\begin{eqnarray*}
\frac{\partial F^*}{\partial y_v}&=& \Psi^{(v)} + \ln
\frac{1-y_v}{y_v}+\sum_{u\in\mathcal N(v)}\ln
\left(\frac{1-y_v-y_u+y_{u,v}}{1-y_v}\cdot
\frac{y_v}{y_v-y_{u,v}}\right)\\
&\geq&-2(\Delta+1)\ln |\psi| +\ln
\frac{1-2\delta}{2\delta}+\Delta\ln \frac1{|\psi|^4}\\
&=&\ln\frac{\frac1{2\delta}-1}{|\psi|^{6\Delta+2}}\\
&\geq& 0,
\end{eqnarray*}
where the last inequality follows again from our choice of
$\delta\leq\frac{1/2}{|\psi|^{6\Delta+2}+1}$.

\paragraph{Proof of \eqref{eq:step2}.} Finally, we provide a proof of
\eqref{eq:step2}. Using \eqref{eq1} and \eqref{eq2}, we obtain
\begin{eqnarray*}
\left|\frac{\partial F^*}{\partial y_v}\right|&\leq&
\left|\Psi^{(v)}\right| + \left|\ln
\frac{1-y_v}{y_v}\right|+\sum_{u\in\mathcal N(v)}\left|\ln
\left(\frac{1-y_v-y_u+y_{u,v}}{1-y_v}\cdot
\frac{y_v}{y_v-y_{u,v}}\right)\right|\\
&\leq&2(\Delta+1)\ln |\psi| +\ln
\frac{1-2\delta}{2\delta}+\Delta\ln |\psi|^4\\
&\leq&\ln \frac{1-\delta}{\delta}+\ln \frac{1-2\delta}{2\delta}\\
&\leq&2\ln \frac{1}{\delta}\\
&\leq &\frac{\delta}{2}\cdot \sqrt{t_*},
\end{eqnarray*}
where the last inequality follows from our choices of $\delta,t_*$ which
imply
$$\sqrt{t_*}~=~\frac{0.01}{\delta^{2}}~\geq~4\frac1{\delta}
\ln\frac1{\delta}.$$ Therefore,
$\frac1{\sqrt{t}}\left|\frac{\partial F^*}{\partial y_v}\right|\leq \frac{\delta}{2}\cdot \sqrt{\frac{t_*}{t}}\leq \frac{\delta}{2}$
for $t\geq t^*$.
This completes the proof of Lemma
\ref{lem:key}.
\end{proof}

Using the above lemma, we will obtain the running-time guarantee of Algorithm A. 
We first explain why it suffices to show the following:
\begin{equation}\label{eq:final}
\sum_{t=t_*}^T c_t\cdot \|\nabla F^* (\y_V(t))\|_2^2
=\frac{({|\psi|+2)}^{O(\Delta)}\,n\,\log T}{\sqrt{T}},\quad
\mbox{where} \quad c_t=c_t(t^*,T):=\frac{t^{-1/2}}{\sum_{t=t^*}^T t^{-1/2}}. 
\end{equation}
The above
equality suggests that we can choose $T={{(|\psi|+2)}^{O(\Delta)} n^2{\varepsilon^{-4}} \log^3 (n \varepsilon^{-1})}$ such that
\begin{equation*}
\sum_{t=t_*}^T c_t\cdot \|\nabla F^* (\y_V(t))\|_2^2\leq
\left(\frac{\varepsilon}6\right)^2.
\end{equation*}
From $\sum_{t=t_*}^T c_t=1$, there exists $t\in[t_*,T]$ such that
$\|\nabla F^* (\y_V(t))\|_2\leq \varepsilon/6$. Further, we have
$$\|\nabla F
(\y_V(t),\by_E(t))\|_2~=~ \|\nabla F^* (\y_V(t))\|_2~\leq~
\varepsilon/6,$$ 
where the first equality follows from
\begin{eqnarray*}
\frac{\partial F}{\partial y_v}(\y_V(t),\by_E(t)) = 
\frac{\partial F^*}{\partial y_v}(\y_V(t))\quad\mbox{from}~~\eqref{eq:derivform}\quad\mbox{and}\quad
\frac{\partial F}{\partial y_{u,v}}(\y_V(t),\by_E(t)) = 
0\quad\mbox{from}~~\eqref{eq:secondmar}.
\end{eqnarray*}
Then, Lemma \ref{lem:ytom} implies that the
computed messages at the $t$-th iteration is an
$\varepsilon$-approximate BP fixed-point since $\|\cdot\|_\infty \leq \|\cdot\|_2$.

Now we proceed toward establishing the desired equality
\eqref{eq:final}. The important implication of Lemma \ref{lem:key}
is that the algorithm does not need the projection $[\cdot]_*$ after
the $t_*$-th iteration. In other words, from 
\eqref{eq:key}, we have that
$$\y_V(t+1)~=~\y_V(t)+ \frac1{\sqrt{t}}\,\nabla F^* (\y_V(t)),\qquad\forall~t\geq t_*.$$
In what follows, we will assume $t\geq t_*$ and $\by_V(t)\in D$, 
where $t_*$ and $D$ are defined in Lemma \ref{lem:key}.

Doing a Taylor series expansion  of $F^*$ around $\y_V(t)$,
we have
\begin{eqnarray}
F^*(\y_V(t+1))&=&F^*\left(\y_V(t)+ \frac1{\sqrt{t}}\,\nabla F^* (\y_V(t))\right)\nonumber\\
&=&F^*(\y_V(t))+\nabla F^*(\y_V(t))'\cdot \frac1{\sqrt{t}}\,\nabla F^*
(\y_V(t))\nonumber\\
&&\qquad\qquad +\frac12\frac1{\sqrt{t}}\,\nabla F^* (\y_V(t))'\cdot R \cdot
\frac1{\sqrt{t}}\,\nabla F^* (\y_V(t)),\label{eq:taylor1}
\end{eqnarray}
where $R$ is a $n\times n$ matrix such that
$$|R_{vw}|\leq \sup_{[y_v]\in B}\left|\frac{\partial^2 F^*}{\partial y_v \partial y_w}\right|,$$
and $B$ is a $L_{\infty}$-ball in $\R^{n}$ centered at $\y_V(t)\in D$
with its radius
$$r=\max_{v\in V}\left|\frac1{\sqrt{t}}\,\frac{\partial F^*}{\partial y_v}(\y(t))\right|.$$
From \eqref{eq:step2}, we know that $r\leq \frac{\delta}2$. Hence,
$y_v\in \left[\delta/2,1-\delta/2\right]$ for $\y_V=[y_v]\in B$. Using
this with $\delta={(|\psi|+2)}^{-O(\Delta)}$, one can check that
$$\sup_{\y\in B}\left|\frac{\partial^2
F^*}{\partial y_v
\partial y_w}\right|~=~
\begin{cases}
{(|\psi|+2)}^{O(\Delta)}&\mbox{if}~v=w~\mbox{or}~ (v,w)\in \edges\\
\quad 0&\mbox{otherwise} \end{cases}.$$

 Therefore, using these
bounds, the equality \eqref{eq:taylor1} reduces to
\begin{align*}
F^*(\y_V(t+1))
&~\geq~F^*(\y_V(t))+\frac1{\sqrt{t}}\,\|\nabla F^*
(\y_V(t))\|_2^2-\frac{O\left({(|\psi|+2)}^{O(\Delta)}|E|\right)}t\\
&~\geq~F^*(\y_V(t))+\frac1{\sqrt{t}}\,\|\nabla F^*
(\y_V(t))\|_2^2-\frac{O\left({(|\psi|+2)}^{O(\Delta)}n\right)}t,
\end{align*}
since $|E|\leq \Delta \cdot n$. If we sum the above inequality over
$t$ from $t_*$ to $T-1$, we have
\begin{eqnarray*}
F^*(\y_V(T))&=&F^*(\y_V(t_*)) + \sum_{t=t_*}^{T-1} F^*(\y_V(t+1)) - F^*(\y_V(t)) \\
&\geq&F^*(\y_V(t_*))+\sum_{t=t_*}^{T-1}\frac1{\sqrt{t}}\,\|\nabla
F^*
(\y_V(t))\|_2^2-O\left({(|\psi|+2)}^{O(\Delta)}n\right)\,\sum_{t=t_*}^{T-1}\frac1t.
\end{eqnarray*}
Since $|F^*(\y_V)|=O(\Delta n)$ for $\y_V\in D$, we obtain
$$\sum_{t=t_*}^{T-1}\frac1{\sqrt{t}}\,\|\nabla F^* (\y_V(t))\|_2^2~\leq~ O( \Delta n)+
O\left({(|\psi|+2)}^{O(\Delta)}n\right)\,\sum_{t=t_*}^{T-1}\frac1{t}.$$
Thus, we finally obtain the desired conclusion \eqref{eq:final} as
follows.
\begin{align*}
\sum_{t=t_*}^T c_t \cdot \|\nabla F^* (\y_V(t))\|_2^2
&=\frac1{\sum_{t=t_*}^{T}\frac1{\sqrt{t}}}\sum_{t=t_*}^{T}\frac1{\sqrt{t}}\,\|\nabla F^* (\y_V(t))\|_2^2\\
&\leq\frac1{\sum_{t=t_*}^{T}\frac1{\sqrt{t}}}\left(O(\Delta n)+ O\left({(|\psi|+2)}^{O(\Delta)}n\right)\,\sum_{t=t_*}^{T}\frac1{t}\right)\\
&=\frac{{(|\psi|+2)}^{O(\Delta)} n \log T}{\sqrt{T}},
\end{align*}
where we recall that $t_*=O(\delta^{-4})={(|\psi|+2)}^{O(\Delta)}$. This completes
the proof of Theorem \ref{thm:runningtime}.

\subsection{Modification to FPTAS}\label{sec:fptas}

In this section, we provide a minor modification of Algorithm A in
Section \ref{sec:two}, to establish a fully polynomial-time
approximation scheme (FPTAS) for the BP fixed-point computation. We
will show a polynomial number of significant bits in each message $y_v(t)$
is enough for the performance of the algorithm. 
To this end, we define the following
function $g^{t}=[g_v^{t}]$ which describes the updating rule (at the
$t$-th iteration) of Algorithm A, i.e.,
$$\by_V(t+1)~=~g^{t}(\by_V(t))\qquad\mbox{under Algorithm A}.$$ 
We propose the following
algorithm.

\vspace{0.1in} \noindent {\bf Algorithm B}
\vspace{0.1in}\hrule
\vspace{0.05in}
\begin{itemize}

\item[1.] Algorithm parameters:
$$k\in\mathbb N,\qquad \varepsilon\in(0,1)\qquad\mbox{and}\qquad
\bz(t)= \left[\,z_v(t)\in(0,1):v\in V\,\right]\quad\mbox{at the
$t$-th iteration,}$$ 
where $z_v(1)=1/2$ and $z_v(t)$ has $k$ bits (i.e., $2^k z_v(t)\in \mathbb{Z}$) for
all $t\geq1$, $v\in V$.

\item[2.] $\bz(t)$ is updated as:
\begin{eqnarray*}
&&\Bigg|z_v(t+1) - g^{t}_v(\bz(t))\Bigg|~\leq~\frac1{2^k}.
\end{eqnarray*}

\item[3.] Compute a set of messages $\{m_{u\rightarrow v},m_{v\rightarrow u}\}$
satisfying
$${m_{u\to v}}~=~{\frac{\psi_{u,v}(0,1)}{\psi_{u,v}(0,0)}\cdot\frac{1-z_v(t)-z_u(t)+z_{u,v}(t)}{1-z_v(t)}\cdot
\frac{z_v(t)}{z_v(t)-z_{u,v}(t)}},$$ where $z_{u,v}(t)>0$ is
computed to satisfy
\begin{eqnarray*}
\left|\Psi^{(u,v)}+ \ln
\left(\frac{z_u(t)-z_{u,v}(t)}{1-z_u(t)-z_v(t)+z_{u,v}(t)}\cdot\frac{z_v(t)-z_{u,v}(t)}{
z_{u,v}(t)}\right)\right|~\leq~ \frac{\varepsilon}6.\end{eqnarray*}

\item[4.] Terminate if $\{m_{u\rightarrow v},m_{v\rightarrow u}\}$
is an $\varepsilon$-approximate BP fixed-point.
\end{itemize}
\hrule \vspace{0.2in} We note that each step in the above algorithm
is executable in a polynomial number of bitwise operations with
respect to ${\Delta}$, $1/\varepsilon$, $k$ and $n$. Step 2
to compute $g_v^t$ consists of $O(\Delta)$ arithmetic operations,
logarithm, division, addition, square root and multiplication.
Furthermore, the equations in Step 3 to compute
$\{m_{u\rightarrow v},m_{v\rightarrow u}\}$ can be solved in a
polynomial number of bitwise operations with respect to
$1/\varepsilon$ and $k$.

Now we state the following theorem, which shows that one can choose
$k$ as a polynomial in terms of $n$, $1/\varepsilon$ and
${|\psi|}^{\Delta}$. This implies that Algorithm B is an FPTAS for such a
choice of $k$ as long as $\Delta=O(\log n)$ and $|\psi|=O(1)$. We note that one can
obtain the explicit bound of $k$ in terms of $\Delta$, $|\psi|$, $n$
and $\varepsilon$ via explicitly calculating each step in our
proof.\footnote{Another naive way to avoid such an explicit choice
of $k$ is to run Algorithm B `polynomially' many times by increasing
$k$ (as well as the number of iterations) until it succeeds.}
\begin{theorem}\label{thm:final}
There exists a $k={(|\psi|+2)}^{O(\Delta)}n^2
\varepsilon^{-4} \log^4 (n \varepsilon^{-1})$ such that Algorithm B
terminates in ${(|\psi|+2)}^{O(\Delta)}n^2 \varepsilon^{-4} \log^3 (n
\varepsilon^{-1})$ iterations.
\end{theorem}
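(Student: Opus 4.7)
The plan is to treat Algorithm B as a noisy version of Algorithm A and re-run the proof of Theorem \ref{thm:runningtime} while controlling the bitwise rounding error. Let $\xi_t := \bz(t+1) - g^t(\bz(t))$ denote the rounding residual, so Step 2 of Algorithm B guarantees $\|\xi_t\|_\infty \leq 2^{-k}$. Once iterations exceed $t_*$ (where the projection becomes inactive), the recurrence reads
\[
\bz(t+1) \;=\; \bz(t) + \tfrac{1}{\sqrt{t}}\,\nabla F^*(\bz(t)) + \xi_t .
\]
I first would verify a perturbed analogue of Lemma \ref{lem:key}: the three inequalities \eqref{eq:step0}--\eqref{eq:step2} all carry uniform slack (roughly a factor of $1/2$), so provided $\|\xi_t\|_\infty \ll \delta$, i.e. $k = \Omega(\log(1/\delta)) = (|\psi|+2)^{O(\Delta)}$, the iterate $\bz(t)$ remains in the slightly shrunken domain $[\delta/2,\,1-\delta/2]^n$ for every $t \geq t_*$ and the projection never fires.

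Next I would mimic the Taylor expansion \eqref{eq:taylor1} around $\bz(t)$, which, by reusing the Hessian sparsity plus the magnitude bound $(|\psi|+2)^{O(\Delta)}$ on the shrunken domain, yields
\[
F^*(\bz(t+1)) \;\geq\; F^*(\bz(t)) + \tfrac{1}{\sqrt{t}}\|\nabla F^*(\bz(t))\|_2^2 + \langle \nabla F^*(\bz(t)), \xi_t\rangle - \tfrac{O((|\psi|+2)^{O(\Delta)} n)}{t} - O\!\Bigl((|\psi|+2)^{O(\Delta)}\tfrac{n}{2^{2k}}\Bigr),
\]
with the new cross term bounded by $|\langle \nabla F^*(\bz(t)),\xi_t\rangle| \leq O(\sqrt{n}\log(1/\delta))/2^k$. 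Summing from $t=t_*$ to $T-1$ and using $|F^*|=O(\Delta n)$ on the shrunken domain gives
\[
\sum_{t=t_*}^{T-1}\tfrac{1}{\sqrt{t}}\|\nabla F^*(\bz(t))\|_2^2 \;\leq\; O\bigl((|\psi|+2)^{O(\Delta)} n \log T\bigr) + T\cdot O\!\left(\tfrac{\sqrt{n}\log(1/\delta)}{2^k}+\tfrac{(|\psi|+2)^{O(\Delta)} n}{2^{2k}}\right).
\]
With $k = (|\psi|+2)^{O(\Delta)} n^2 \varepsilon^{-4}\log^4(n\varepsilon^{-1})$ the second summand is dwarfed by the first, and the weighted-average argument of Section \ref{sec:mainpf} then produces some $t\in[t_*,T]$ with $\|\nabla F^*(\bz(t))\|_2 \leq \varepsilon/6$ for $T = (|\psi|+2)^{O(\Delta)} n^2 \varepsilon^{-4}\log^3(n\varepsilon^{-1})$.

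The main obstacle I anticipate is the final translation from near-stationarity of $F^*$ into an $\varepsilon$-approximate BP fixed-point, since $z_{u,v}(t)$ in Algorithm B only approximately satisfies \eqref{eq:secondmar}. Writing $\bz_E(t) = [z_{u,v}(t)]$, Step 3 directly gives $|\partial F/\partial y_{u,v}(\bz(t),\bz_E(t))|\leq \varepsilon/6$. For the node components, the chain rule yields $\partial_{y_v}F^* = \partial_{y_v}F + \sum_{u\in\n(v)} \partial_{y_{u,v}}F \cdot \partial_{y_v} y_{u,v}$; applying the implicit function theorem to the (uniformly nonsingular on the shrunken domain) system \eqref{eq:secondmar} bounds $|\partial_{y_v} y_{u,v}|$ by $(|\psi|+2)^{O(\Delta)}$, so $\bigl|\partial_{y_v}F^*(\bz(t)) - \partial_{y_v}F(\bz(t),\bz_E(t))\bigr|\leq \Delta\cdot(|\psi|+2)^{O(\Delta)}\cdot \varepsilon/6$. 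Tightening the Step 3 tolerance by the same factor (and absorbing the rescaling into the $O(\cdot)$ in $T$ and $k$) gives $\|\nabla F(\bz(t),\bz_E(t))\|_\infty \leq \varepsilon$, so Lemma \ref{lem:ytom} certifies that the messages computed in Step 3 form an $\varepsilon$-approximate BP fixed-point. Finally, since each logarithm, division, square root, and quadratic solve inside $g^t$ is executable to $k$ bits in $\mathrm{poly}(k)$ bit operations, the entire procedure runs in a polynomial number of bitwise operations in $n$, $1/\varepsilon$, and $(|\psi|+2)^\Delta$.
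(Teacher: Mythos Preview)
Your approach is correct and takes a genuinely different route from the paper. The paper does \emph{not} re-run the potential-function argument on $\bz(t)$; instead it uses a shadowing argument. It observes that $g^t$ is $L$-Lipschitz on $[\delta/2,1-\delta/2]^n$ with $L=(|\psi|+2)^{O(\Delta)}$, so that $\|\by_V(t)-\bz(t)\|_1 \leq (L+1)^{t-1}\,n/2^k$ as long as both iterates remain in that box, and then invokes Theorem~\ref{thm:runningtime} as a black box: at the iteration $t\leq T$ where Algorithm~A achieves $\|\nabla F^*(\by_V(t))\|_\infty\leq\gamma$, Lipschitzness of $\nabla F^*$ gives $\|\nabla F^*(\bz(t))\|_\infty\leq(|\psi|+2)^{O(\Delta)}\gamma$, and one then sets $\gamma=\varepsilon/(|\psi|+2)^{O(\Delta)}$ and picks $k$ so that $(L+1)^T n/2^k\leq\gamma$. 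Because the shadowing error blows up like $L^t$, this forces $k$ to be of order $T\log L$, which is exactly the bound in the statement. Your direct analysis instead absorbs $\xi_t$ into the telescoping sum, so the accumulated noise is only $O\!\big(T\cdot n\log(1/\delta)/2^k\big)$; this actually shows that a far smaller $k$ (logarithmic in $T$) would already suffice, though of course the stated $k$ works too. The tradeoff is that the paper's proof is shorter and uses Theorem~\ref{thm:runningtime} off the shelf, while yours reopens that proof but yields a sharper bit-complexity bound and is more robust to other forms of per-step perturbation.

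One small correction to your last paragraph: the chain-rule identity you write holds at the \emph{exact} implicit solution $\by_E^*(\bz(t))$, where $\partial_{y_{u,v}}F=0$ and the sum vanishes, giving simply $\partial_{y_v}F^*(\bz)=\partial_{y_v}F(\bz,\by_E^*(\bz))$. To pass from there to $\partial_{y_v}F(\bz,\bz_E(t))$ you need Lipschitz continuity of $\partial_{y_v}F$ in the edge coordinates, i.e.\ a bound on $|\partial^2 F/\partial y_v\,\partial y_{u,v}|$, together with $|z_{u,v}(t)-y_{u,v}^*|\leq(\varepsilon/6)\big/\inf|\partial^2 F/\partial y_{u,v}^2|$ from the mean-value theorem; the Jacobian $\partial_{y_v}y_{u,v}$ is not the relevant object here. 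Both second derivatives are controlled by powers of $|\psi|$ on the shrunken domain (this is implicit in the paper's Hessian bound for $F^*$), so your conclusion stands after this routine fix.
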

\begin{proof}
To begin with, one can check that $g^t$ is ${(|\psi|+2)}^{O(\Delta)}$-Lipschitz
in $[\delta/2,1-\delta/2]^n$ where $\delta={(|\psi|+2)}^{-O(\Delta)}$ is the
constant in Lemma \ref{lem:key}. Formally speaking, for all $t\geq
1$, $\by_1,\by_2\in [\delta/2,1-\delta/2]^n$,
$$\|g^t(\by_1) - g^t(\by_2)\|_1~\leq~L\cdot \|\by_1-\by_2\|_1,$$
where $L={(|\psi|+2)}^{O(\Delta)}$. Let $\by_V(t)$ and $\bz(t)$ be variables of
Algorithm A and B, respectively. Initially, $\by_V(1)=\bz(1)$.
Then, if $\by_V(t),\bz(t)\in [\delta/2,1-\delta/2]^n$,
\begin{eqnarray}
\|\by_V(t+1)-\bz(t+1)\|_1&\leq&\|g^t(\by_V(t))-g^t(\bz(t))\|_1 +
\frac{n}{2^k}\notag\\
&\leq&L\cdot \|\by_V(t)-\bz(t)\|_1 + \frac{n}{2^k}\notag\\
 &=&h\left(\|\by_V(t)-\bz(t)\|_1\right),\label{eq3}
\end{eqnarray}
where we define $h(x):=L \cdot x + \frac{n}{2^k}$. From Lemma 3, we
know that $\by_V(t)\in [\delta,1-\delta]^n$ for all $t\geq1$. Hence,
for $t\geq 1$ with $h^{(t-1)}(0)\leq
\delta/2$,\footnote{$h^{(t)}$ is the function composing $h$ `$t$
times', i.e., $h^{(t)}=h\circ h^{(t-1)}$ and $h^{(1)}=h$.} we have that $\by_V(t),\bz(t)\in [\delta/2,1-\delta/2]^n$ and
$$\|\by_V(t+1)-\bz(t+1)\|_1~\leq~
 h\left(\|\by_V(t)-\bz(t)\|_1\right)~\leq~ h^{(t)}\left(\|\by_V(1)-\bz(1)\|_1\right) ~=~h^{(t)}(0).$$
Further,
using $h^{(t)}(0)<(L+1)^t\cdot n/2^k $, it follows that
\begin{eqnarray}
\|\by_V(t)-\bz(t)\|_1\leq \gamma,\qquad\mbox{for}~~t\leq T:=\frac{\log
\left(2^{k}\gamma/n \right)}{\log (L+1)}+1,\label{eq4}
\end{eqnarray}
where $\gamma< \delta/2$ will be specified later on.


Now it is not hard to see that for $t\leq T$ (i.e.,
$\by_V(t),\bz(t)\in [\delta/2,1-\delta/2]^n$ and
$\|\by_V(t)-\bz(t)\|_1\leq \gamma$), if $\|\nabla
F^*(\by_V(t))\|_\infty\leq\gamma$, then $\|\nabla
F^*(\bz(t))\|_\infty\leq\gamma\cdot {(|\psi|+2)}^{O(\Delta)}$. Observe that one
can choose $\gamma=\varepsilon/{(|\psi|+2)}^{O(\Delta)}$ and
$k={(|\psi|+2)}^{O(\Delta)}n^2 \gamma^{-4} \log^4 (n \gamma^{-1})$ so that
Algorithm A has $\|\nabla F^*(\by_V(t))\|_\infty\leq\gamma$ in $T$
iterations, and hence Algorithm B has $\|\nabla
F^*(\bz(t))\|_\infty\leq\varepsilon/6$ in the same number of
iterations. Therefore, the conclusion of Theorem \ref{thm:final}
follows from Lemma \ref{lem:ytom}.
\end{proof}

\section{Extension to Non-Binary Graphical Models}\label{sec:three}

In this section, we discuss how to design a similar algorithm to
those in the previous section for non-binary graphical models. Here
we provide a high-level description, but one can check the further
details based on the identical arguments to the binary case in
Section \ref{sec:two}.

Consider non-binary random variables $\{x_v\}$ in the graphical
model described in Section \ref{sec:one}, i.e., $x_v\in
[Q]=\{0,1,\dots,Q-1\}$ for some $Q\geq 3$. Hence, the potential
functions $\psi_{u,v}$ and $\psi_v$ are functions on $[Q]^2$ and
$[Q]$, respectively. We remind the reader that the essential goal is
to find a near-stationary point of the following Bethe approximation
under the constraints $\sum_{x_v}\tau_v(x_v)=1$ for all $v\in V$ and
$\tau_v(x_v)=\sum_{x_u}\tau_{u,v}(x_u,x_v)$ for all $v\in V, x_v\in [Q]$.
\begin{eqnarray*}
\sum_{v \in V} \sum_{x_v} \tau_v(x_v)
\left[ \ln  \psi_v(x_v) - \ln  \tau_v(x_v) \right] 
  + \sum_{\{u,v\} \in E} \sum_{x_u,x_v}
\tau_{u,v}(x_u,x_v) \Bigg[ \ln  \psi_{u,v}(x_u,x_v)
- \ln \frac{\tau_{u,v}(x_u,x_v)}{\tau_u(x_u) \tau_v(x_v)}\Bigg].
\end{eqnarray*}

First, for every pair $(p,q)\in [Q]^2$ with $p\neq q$, one can define $F^*_{p,q}$
on $\by_V=[y_v]\in[0,1]^n$ similar to $F^*$ in Section
\ref{sec:mainpf}, by (a) fixing variables except for
$\left\{\tau_v(x_v):x_v\in\{p,q\},v\in V\right\}$ and
$\left\{\tau_{u,v}(x_u,x_v):x_v,x_u\in\{p,q\},(u,v)\in E\right\}$, (b) setting
$y_v=\tau_v(p)$, and (c) considering the constraints
$\sum_{x_v}\tau_v(x_v)=1$ and
$\tau_v(x_v)=\sum_{x_u}\tau_{u,v}(x_u,x_v)$. Now the algorithm for a
non-binary graphical model maintains variables
$$\left\{\tau^t_v(x_v):v\in V, x_v\in [Q]\right\}\qquad\mbox{and}\qquad
\left\{\tau_{u,v}^t(x_u,x_v):(u,v)\in E, (x_u,x_v)\in
[Q]^2\right\},$$ at the $t$-th iteration. At each round, it picks a
pair $(p,q)\in[Q]^2$ with $p\neq q$ in a round-robin fashion and updates the vector
$\left\{\tau_v^t(p)\right\}$ as
$$\tau_v^{t+1}(p)~=~
\left[\tau_v^t(p)+\frac{1}{\sqrt{t}}\nabla
F^*_{p,q}\left(\left\{\tau_u^t(p):u\in V\right\}\right)\right]_*.$$
This is equivalent to
\begin{eqnarray}
\tau_v^{t+1}(p) &= &\Bigg[\tau_v^{t}(p)+ \frac1{\sqrt{t}}\Bigg(\Psi^{(v)}_{p,q} + \ln
\frac{c_{p,q}^t-\tau_v^{t}(p)}{\tau_v^{t}(p)}\nonumber\\
&&\qquad\qquad\qquad+\sum_{u\in\mathcal N(v)}\ln
\left(\frac{c_{p,q}^t-\tau_v^{t}(p)-\tau_u^{t}(p)+\tau_{u,v}^{t}(p,p)}{c_{p,q}^t-\tau_v^{t}(p)}\cdot
\frac{\tau_v^{t}(p)}{\tau_v^{t}(p)-\tau_{u,v}^{t}(p,p)}\right)\Bigg)\Bigg]_*,\nonumber\\\label{eq:updatenb1}
\end{eqnarray}
where $c_{p,q}^t:=1-\sum_{x_v\neq p,q} \tau_v^t(x_v)$ and $\tau_{u,v}^{t}(p,p)>0$ is always the unique solution satisfying
\begin{eqnarray}
e^{\Psi^{(u,v)}_{p,q}}\cdot\frac{\tau_u^{t}(p)-\tau_{u,v}^{t}(p,p)}{c_{p,q}^t-\tau_u^{t}(p)-\tau_v^{t}(p)+\tau_{u,v}^{t}(p,p)}\cdot\frac{\tau_v^{t}(p)-\tau_{u,v}^{t}(p,p)}{
\tau_{u,v}^{t}(p,p)}=1&\mbox{and}&
\tau_{u,v}^{t}(p,p)<\min\{\tau_v^{t}(p),\tau_u^{t}(p)\}.\nonumber\\
\label{eq:updatenb2}\end{eqnarray}
In the above, $\Psi^{(v)}_{p,q},\Psi^{(u,v)}_{p,q}$ can be defined in an analogous way to \eqref{eq:defPsi}
using $p,q$ in places of $1,0$.

Once $\tau_{v}^{t+1}(p)$ and $\tau_{u,v}^{t+1}(p,p)$ are updated as per \eqref{eq:updatenb1} and \eqref{eq:updatenb2},
$\tau_v^{t+1}(q)$, $\tau_{u,v}^{t+1}(p,q)$, $\tau_{u,v}^{t+1}(q,p)$ and $\tau_{u,v}^{t+1}(q,q)$ can be also updated as follows:
\begin{eqnarray*}
	\tau_v^{t+1}(q)&=& c_{p,q}^t-\tau_v^{t+1}(p)\\
\tau_{u,v}^{t+1}(p,q)&=& \tau_{u}^{t+1}(p)-\tau_{u,v}^{t+1}(p,p)\\
\tau_{u,v}^{t+1}(q,p)&=& \tau_{v}^{t+1}(p)-\tau_{u,v}^{t+1}(p,p)\\
\tau_{u,v}^{t+1}(q,q)&=& 1-\tau_{v}^{t+1}(p)-\tau_{u}^{t+1}(p)+\tau_{u,v}^{t+1}(p,p).
\end{eqnarray*}
The convergence rate analysis of this updating rule for non-binary graphical models is almost identical to what
we did in Section \ref{sec:mainpf}. We omit the further details
in this paper.

\section{Experimental Results}\label{sec:simul}

In this section, we report experimental results comparing Algorithm A in Section \ref{sec:two} and the standard
BP algorithm, where we consider a grid-like graph $G=(V,E)$ of 100 nodes such that
$$V=\{(i,j): 0\leq i,j\leq 9\}\qquad E=\{((i,j),(i^{\prime},j^{\prime}))\in V\times V:(i-i^{\prime})^2 +
(j-j^{\prime})^2 = 1\mod 10\}.$$
For the choice of potential functions, two popular MRFs are studied: (1) hard-core model and (2) Ising model.
We also note that as we mentioned in Section \ref{sec:two}, 
we choose the step-size $\frac1{\sqrt{t+100}}$ (instead of $\frac1{\sqrt{t}}$) for Algorithm A.

\paragraph{Hard-core Model.} 
The hard-core model has its origin as a lattice gas model with hard constraints in statistical physics \cite{GauntFisher}, 
but it has also gained much attention in the fields of communication networks, combinatorics,
probability and theoretical computer science. In this model, the potential functions are defined to be
\begin{eqnarray*}
	\psi_v(1)=\lambda \qquad\psi_v(0)=1\qquad\psi_{u,v}(1,1)=0\qquad
	\psi_{u,v}(0,0)=\psi_{u,v}(0,1)=\psi_{u,v}(1,0)=1,
\end{eqnarray*}
where $\lambda>0$ is called `fugacity' (or `activity'). Since Algorithm A requires $\psi(\cdot,\cdot)> 0$ (see Section \ref{sec:one}),
we consider $\psi_{u,v}(1,1)=0.001$ instead of $\psi_{u,v}(1,1)=0$.

The simulation results are reported in Figure \ref{fig1} for the hard-core model with $\lambda=1,2$. 
When $\lambda=1$, both algorithms converge: the standard BP 
algorithm needs at least 20 iterations to converge, while Algorithm A 
converges faster, namely in not more than 9 iterations.
When $\lambda=2$,
the standard BP algorithm does not converge, while Algorithm A still converges in $15$ iterations. 

\begin{figure}[ht]
\begin{center}
\includegraphics[width=150mm]{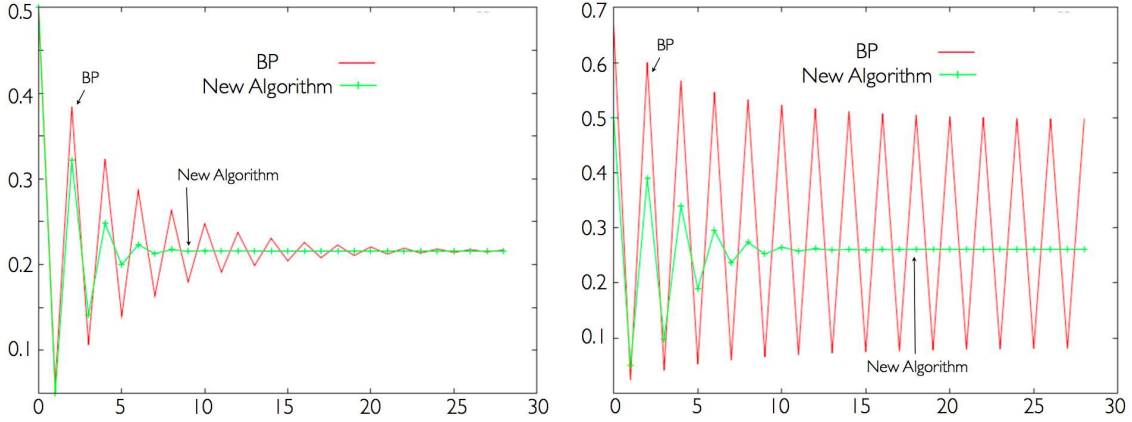}
\caption{Comparisons of the standard BP and our new algorithm, Algorithm A, in hard-core models.
In both figures, $x$-axis and $y$-axis are for 
the number of iterations and for the BP estimation of the marginal probability 
at node $(0,0)$, respectively.
The left and right figures are for $\lambda=1$ and $\lambda=2$, respectively.
In both cases, our algorithm converges faster than BP.} \label{fig1}
\end{center}
\end{figure}

\paragraph{Ising Model.} 
The Ising model, which is named after the physicist 
Ernst Ising \cite{ising}, is a popular mathematical model in statistical mechanics.
In this model, the `edge' potential functions are decided as
\begin{eqnarray*}
	\psi_{u,v}(1,1)=\psi_{u,v}(0,0)=e^{\beta J_{u,v}}\qquad \psi_{u,v}(0,1)=\psi_{u,v}(1,0)=1,
\end{eqnarray*}
where $\beta>0$ and $J_{u,v}$ are called `inverse temperature' and `interaction', respectively.
For the `node' potential functions, we choose $\psi_v(1)$ uniformly at random in $[1/2,2]$ independently for each $v\in V$ and
set $\psi_v(0)=1$ deterministically.

We report the simulation results for the Ising model in Figure \ref{fig2}, where
the left and right figures are obtained for $e^{\beta J_{u,v}}=2$ (ferromagnetic)
and $e^{\beta J_{u,v}}=1/2$ (anti-ferromagnetic) for all $(u,v)$ in $E$, respectively. 
In both cases, we observe that the standard BP algorithm converges faster than Algorithm A. 
However, we note that we did not make any significant efforts to choose a better step-size
in Algorithm A so that it converges faster.

\begin{figure}[ht]
\begin{center}
\includegraphics[width=150mm]{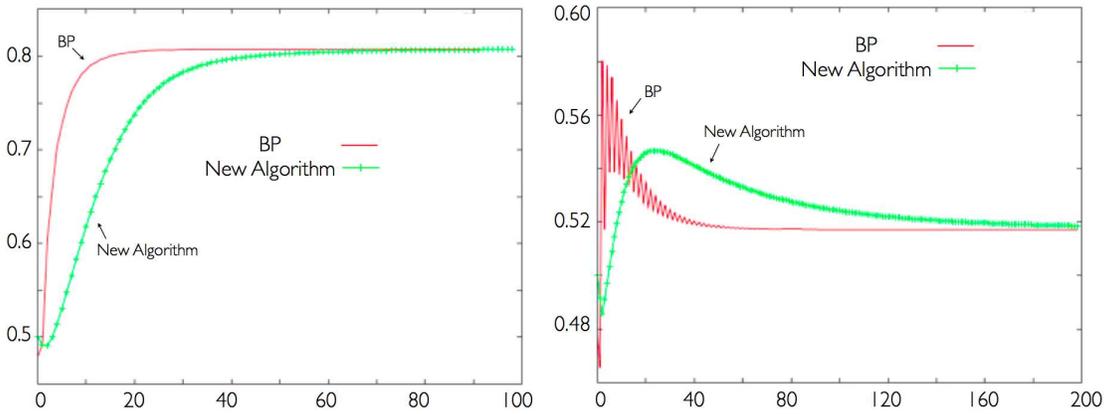}
\caption{Comparisons of the standard BP and our new algorithm, Algorithm A, in Ising models. 
In both figures, $x$-axis and $y$-axis are for 
the number of iterations and for the BP estimation of the marginal probability 
at node $(0,0)$, respectively.
The left and right figures are for $e^{\beta J_{u,v}}=2$
and $e^{\beta J_{u,v}}=1/2$, respectively. In both cases, our algorithm converges slower than BP.
} \label{fig2}
\end{center}
\end{figure}

\section{Conclusion}

In the last decade, exciting progress has been made on
understanding computationally hard problems in computer science
using a variety of methods from statistical physics. The belief
propagation (BP) algorithm or its variants are among them and
suggest to solve certain `relaxations' of hard problems. In this
paper, we address the question whether the relaxation is indeed
computationally easy to solve in a strong sense. We believe that our
rigorous complexity analysis of the BP-relaxation is an important
step to guarantee the complexity of BP-based algorithms.

\section{Acknowledgment}
We are grateful to Pascal Vontobel, Devavrat Shah and anonymous reviewers 
for their fruitful comments on this paper. We are also grateful to Max Welling
for pointing out the similarity between our algorithm and that in
\cite{Teh}.



\appendix

\section{Proof of Lemma \ref{lem:ytom}}\label{sec:pfytom}
To simplify notation, we use $a=b\cdot e^{\pm\varepsilon}$ to mean
$b\cdot e^{-\varepsilon}\leq a\leq b\cdot e^{\varepsilon}$. Now from
$\left|\frac{\partial F}{\partial y_v}\right|\leq\varepsilon$ and
$\left|\frac{\partial F}{\partial y_{u,v}}\right|\leq\varepsilon$ (i.e., the assumption $\|\nabla F(\by)\|_\infty \leq\varepsilon$
in Lemma \ref{lem:ytom}),
we have
$$\frac{\psi_u(1)}{\psi_u(0)}\cdot \frac{\psi_{u,v}(1,0)}{\psi_{u,v}(0,0)}\cdot\frac{1-y_v-y_u+y_{u,v}}{y_u-y_{u,v}}\cdot\prod_{w\in\mathcal N(u)\setminus v}
m_{w\to u}=e^{\pm\varepsilon}.
$$
$$e^{\Psi^{(u,v)}}\cdot
\frac{y_u-y_{u,v}}{1-y_u-y_v+y_{u,v}}\cdot\frac{y_v-y_{u,v}}{
y_{u,v}}=e^{\pm\varepsilon}.$$ Using the above inequalities, the
desired conclusion of Lemma \ref{lem:ytom} follows as
\begin{eqnarray*}
m_{u\to
v}&=&\frac{\psi_{u,v}(0,1)}{\psi_{u,v}(0,0)}\cdot\frac{1-y_v-y_u+y_{u,v}}{1-y_v}\cdot
\frac{y_v}{y_v-y_{u,v}}\\
&=&\frac{\psi_{u,v}(0,1)}{\psi_{u,v}(0,0)}\cdot\frac{1+\frac{y_{u,v}}{y_v-y_{u,v}}
}{1+\frac{y_u-y_{u,v}}{1-y_v-y_u+y_{u,v}}}\\
&=&\frac{\psi_{u,v}(0,1)}{\psi_{u,v}(0,0)}\cdot\frac{1+e^{\pm\varepsilon}\cdot
e^{\Psi^{(u,v)}}\cdot \frac{y_u-y_{u,v}}{1-y_u-y_v+y_{u,v}}
}{1+\frac{y_u-y_{u,v}}{1-y_v-y_u+y_{u,v}}}\\
&=&\frac{\psi_{u,v}(0,1)}{\psi_{u,v}(0,0)}\cdot\frac{1+e^{\pm2\varepsilon}\cdot
e^{\Psi^{(u,v)}}\cdot
\frac{\psi_u(1)}{\psi_u(0)}\cdot\frac{\psi_{u,v}(1,0)}{\psi_{u,v}(0,0)}\cdot\prod_{w\in\mathcal
N(u)\setminus v} m_{w\to u}
}{1+e^{\pm\varepsilon}\cdot\frac{\psi_u(1)}{\psi_u(0)}\cdot\frac{\psi_{u,v}(1,0)}{\psi_{u,v}(0,0)}\cdot\prod_{w\in\mathcal
N(u)\setminus v} m_{w\to u}}\\
&=&e^{\pm3\varepsilon}\cdot\frac{\psi_{u,v}(0,1)}{\psi_{u,v}(0,0)}\cdot\frac{1+
e^{\Psi^{(u,v)}}\cdot
\frac{\psi_u(1)}{\psi_u(0)}\cdot\frac{\psi_{u,v}(1,0)}{\psi_{u,v}(0,0)}\cdot\prod_{w\in\mathcal
N(u)\setminus v} m_{w\to u}
}{1+\frac{\psi_u(1)}{\psi_u(0)}\cdot\frac{\psi_{u,v}(1,0)}{\psi_{u,v}(0,0)}\cdot\prod_{w\in\mathcal
N(u)\setminus v} m_{w\to u}}\\
&=&e^{\pm3\varepsilon}\cdot f_{u\to v}\left(\prod_{w\in\mathcal
N(u)\setminus v} m_{w\to u}\right)\\
&=&(1\pm6\varepsilon)\cdot f_{u\to v}\left(\prod_{w\in\mathcal
N(u)\setminus v} m_{w\to u}\right).\end{eqnarray*}

\section{Intuition for Lemma \ref{lem:ytom}}\label{sec:intytom}
We provide some intuition for \eqref{eq:convert} under assuming that $G$ is a tree graph. 
Since the BP marginal estimates $\tau_v(\cdot),\tau_{u,v}(\cdot)$, or equivalently the Bethe stationary points $y_v, y_{u,v}$,
are exact for tree graphs, we have that
$$\frac{1-y_v-y_u+y_{u,v}}{1-y_v}\cdot
\frac{y_v}{y_v-y_{u,v}}~=~\frac{\Pr(x_u=0,x_v=0)}{\Pr(x_v=0)}\cdot
\frac{\Pr(x_v=1)}{\Pr(x_u=0,x_v=1)}~=~\frac{\Pr(x_u=0~|~x_v=0)}{\Pr(x_u=0~|~x_v=1)}.$$
Hence, \eqref{eq:convert} is equivalent to
$${m_{u\to v}}
~=~\frac{\psi_{u,v}(0,1)}{\psi_{u,v}(0,0)}\cdot\frac{\Pr(x_u=0~|~x_v=0)}{\Pr(x_u=0~|~x_v=1)}
~=~\frac{\psi_{u,v}(0,1)}{\psi_{u,v}(0,0)}\cdot\frac{\Pr_T(x_u=0~|~x_v=0)}{\Pr_T(x_u=0~|~x_v=1)},$$
where
$T$ be the subtree of $G$ cutting all branches attached to $v$ except for that including $u$ and $\Pr_T(\cdot)$ denotes
the probability distribution defined by the natural induced graphical model on $T$ (using same potential functions).
Furthermore, it is easy to check that BP fixed-point messages satisfy $\frac{\psi_v(1)}{\psi_v(0)}\cdot {m_{u\to v}} =  \frac{\Pr_T(x_v=1)}{\Pr_T(x_v=0)}$
since $G,T$ are trees.
Therefore, it follows that
\begin{eqnarray*}
\eqref{eq:convert}~~\Leftrightarrow &&	{m_{u\to v}}
	=\frac{\psi_{u,v}(0,1)}{\psi_{u,v}(0,0)}\cdot\frac{\Pr_T(x_u=0~|~x_v=0)}{\Pr_T(x_u=0~|~x_v=1)}\\
\Leftrightarrow	&&\frac{\psi_v(0)}{\psi_v(1)}\cdot  \frac{\Pr_T(x_v=1)}{\Pr_T(x_v=0)}
=\frac{\psi_{u,v}(0,1)}{\psi_{u,v}(0,0)}\cdot\frac{\Pr_T(x_u=0~|~x_v=0)}{\Pr_T(x_u=0~|~x_v=1)}\\
\Leftrightarrow	&&\frac{\psi_v(0)}{\psi_v(1)}\cdot  \frac{\psi_{u,v}(0,0)}{\psi_{u,v}(0,1)}
=\frac{\Pr_T(x_u=0,x_v=0)}{\Pr_{T}(x_u=0,x_v=1)}.
\end{eqnarray*}
In the above, the last equality can be verified easily using the fact that $v$ is a leave of tree $T$.

\end{document}